\documentclass[sn-mathphys-num]{sn-jnl}%
\usepackage{lmodern}

\usepackage{graphicx}%
\usepackage{multirow}%
\usepackage{amsmath,amssymb,amsfonts}%
\usepackage{amsthm}%
\usepackage{mathrsfs}%
\usepackage[title]{appendix}%
\usepackage{xcolor}%
\usepackage{textcomp}%
\usepackage{manyfoot}%
\usepackage{booktabs}%
\usepackage{algorithm}%
\usepackage{algorithmicx}%
\usepackage{algpseudocode}%
\usepackage{listings}%

\usepackage{cleveref} %
\usepackage{subcaption} %
\usepackage{mathtools}
\usepackage{stmaryrd}
\usepackage{etoolbox} %

\RequirePackage{paralist}
\usepackage{enumitem}
\usepackage{multirow} %
\usepackage{array}
\usepackage{graphicx}
\usepackage{etoc}
\usepackage{listings}
\usepackage{eqparbox}
\usepackage{bbm}
\usepackage{soul}
\usepackage{lmodern}

\AtEndPreamble{%
 
\newtheorem{theorem}{Theorem}
\newtheorem{lemma}[theorem]{Lemma} 
 
\newtheorem{corollary}[theorem]{Corollary}

  \newtheorem{condition}{Condition}
  
  \theoremstyle{acmplain}
  \newenvironment{definition*}
                 {\pushQED{\qed}\definition}
                 {\popQED\enddefinition}  
  \newenvironment{example*}
                 {\pushQED{\qed}\example}
                 {\popQED\endremark}  
  \newenvironment{remark*}
                 {\pushQED{\qed}\remark}
                 {\popQED\endremark}  
  \newenvironment{assumption*}
                 {\pushQED{\qed}\assumption}
                 {\popQED\endremark}  
  \newenvironment{property*}
                 {\pushQED{\qed}\property}
                 {\popQED\endremark}  
}

\crefname{assumption}{Assumption}{Assumptions}
\crefname{figure}{Fig{.}}{Figs{.}}%
\crefname{table}{Table}{Tables}
\crefname{definition}{Definition}{Definitions}
\crefname{theorem}{Theorem}{Theorems}
\crefname{lemma}{Lemma}{Lemmas}
\crefname{proposition}{Proposition}{Propositions}
\crefname{corollary}{Corollary}{Corollaries}
\crefname{problem}{Problem}{Problems}
\crefname{example}{Example}{Examples}
\crefname{fact}{Fact}{Facts}
\crefname{conjecture}{Conjecture}{Conjectures}
\crefname{remark}{Remark}{Remarks}
\crefname{condition}{Condition}{Conditions}
\crefname{requirement}{Requirement}{Requirements}
\crefname{enumi}{}{}
\crefname{equation}{Eq{.}}{Eqs{.}}
\crefname{section}{Section}{Sections}

\newcommand{\mathboldcommand}[1]{\mathbb{#1}}

\newcommand{\bbN}{\mathboldcommand{N}}

\newcommand{\bbQ}{\mathboldcommand{Q}}
\newcommand{\bbR}{\mathboldcommand{R}}

\newcommand{\bbZ}{\mathboldcommand{Z}}

\newcommand{\bfa}{\mathbf{a}}
\newcommand{\bfb}{\mathbf{b}}
\newcommand{\bfc}{\mathbf{c}}

\newcommand{\bfp}{\mathbf{p}}

\newcommand{\bfv}{\mathbf{v}}
\newcommand{\bfw}{\mathbf{w}}
\newcommand{\bfx}{\mathbf{x}}

\newcommand{\bfz}{\mathbf{z}}

\usepackage{euscript}
\newcommand{\mathcalcommand}[1]{\mathcal{#1}}

\newcommand{\mcB}{\mathcalcommand{B}}
\newcommand{\mcC}{\mathcalcommand{C}}

\newcommand{\mcG}{\mathcalcommand{G}}

\newcommand{\mcI}{\mathcalcommand{I}}

\newcommand{\mcN}{\mathcalcommand{N}}

\newcommand{\mcS}{\mathcalcommand{S}}

\newcommand{\mcV}{\mathcalcommand{V}}

\newcommand{\mcX}{\mathcalcommand{X}}

\DeclareMathAlphabet{\mathpzc}{T1}{pzc}{m}{it}

\newcommand*{\aff}{{\mathrm{aff}}}
\newcommand*{\defeq}{\triangleq}
\newcommand*{\round}[1]{ \left\lceil{#1}\right\rfloor}

\DeclareMathOperator*{\argmin}{arg\,min}

\newcommand{\Qps}{\bbQ_{p,s}}
\newcommand{\Qis}{\bbQ_{\infty,s}}

\newcommand{\tcv}[1]{{{\color{black}{#1}}}}

\newcommand*{\relu}{\mathrm{ReLU}}
\newcommand*{\SiLU}{\mathrm{SiLU}}
\newcommand*{\SoftPlus}{\mathrm{SoftPlus}}
\newcommand*{\Mish}{\mathrm{Mish}}
\newcommand*{\GeLU}{\mathrm{GELU}}
\newcommand*{\erf}{\mathrm{erf}}
\newcommand*{\Sigmoid}{\mathrm{Sigmoid}}
\newcommand*{\ELU}{\mathrm{ELU}}

\newcommand*{\elu}{\mathrm{ELU}}

\newcommand*{\hardtanh}{\mathrm{Hardtanh}}

\newcommand*{\indc}[2]{\mathbbm{1}_{#1}\left({#2}\right)}
\newcommand{\Mod}[1]{\ (\mathrm{mod}\ #1)}
\newcommand{\modd}[1]{~\mathrm{mod}~#1}

\DeclareMathOperator{\sgn}{sgn}

\raggedbottom

\begin{document}

\title{On Expressive Power of Quantized Neural Networks under Fixed-Point Arithmetic}

\author[1]{\fnm{Yeachan} \sur{Park}}\email{ychpark@sejong.ac.kr}
\equalcont{These authors contributed equally to this work.}

\author[2]{\fnm{Sejun} \sur{Park}}\email{sejun.park000@gmail.com}\equalcont{These authors contributed equally to this work.}

\author*[3]{\fnm{Geonho} \sur{Hwang}}\email{hgh2134@gist.ac.kr}

\affil[1]{\orgdiv{Department of Mathematics and Statistics}, \orgname{Sejong University}, \orgaddress{\city{Seoul}, \postcode{05006}, \country{Republic of Korea}}}

\affil[2]{\orgdiv{Department of Artificial Intelligence}, \orgname{Korea University}, \orgaddress{ \city{Seoul}, \postcode{02841}, \country{Republic of Korea}}}

\affil[3]{\orgdiv{Department of Mathematical Sciences}, \orgname{Gwangju Institute of Science and Technology}, \orgaddress{\city{Gwangju}, \postcode{61005}, \country{Republic of Korea}}}

\abstract{Existing works on the expressive power of neural networks typically assume real parameters and exact operations.
In this work, we study the expressive power of quantized networks under discrete fixed-point parameters and inexact fixed-point operations with round-off errors. 
We first provide a necessary condition and a sufficient condition on fixed-point arithmetic and activation functions for quantized networks to represent all fixed-point functions from fixed-point vectors to fixed-point numbers. 
Then, we show that various popular activation functions satisfy our sufficient condition, e.g., Sigmoid, ReLU, ELU, SoftPlus, SiLU, Mish, and GELU.
In other words, networks using those activation functions are capable of representing all fixed-point functions.
We further show that our necessary condition and sufficient condition coincide under a mild condition on activation functions: e.g., for an activation function $\sigma$, there exists a fixed-point number $x$ such that $\sigma(x)=0$. Namely, we find a necessary and sufficient condition for a large class of activation functions.
We lastly show that even quantized networks using binary weights in $\{-1,1\}$ can also represent all fixed-point functions for practical activation functions.}

\keywords{Neural Networks, Universal Approximation,  Quantization, Fixed-Point Arithmetic }

\pacs[MSC Classification]{65D15, 68T07, 68T09}

\maketitle

\section{Introduction}\label{sec:intro}
In the theory of neural networks, a class of results called the \emph{universal approximation theorem} state that for any continuous function $f^*$ on a compact domain $\mcC\subset\bbR^d$ and for any error bound $\varepsilon>0$, there exists a neural network $f$ such that $f^*$ $\sup_{x\in\mcC}\|f^*(x)-f(x)\|_\infty\le\varepsilon$. 
Such results have been established for various network architectures (e.g., multi-layer perceptron \cite{cybenko89,Hornik89,leshno1993multilayer,pinkus99,yarotsky2017error}, convolutional neural networks \cite{zhou2020universality,shen2022approximation}, transformer \cite{Yun2020Are,yun2020n,alberti2023sumformer,kajitsuka2024are}) and various error measures (e.g., $L^p$ norm \cite{Hornik89,park21,kim2024minimum,shin2025minimum} and uniform norm \cite{cybenko89,leshno1993multilayer,kidger2020universal}). 
Furthermore, they are often considered proof of the following common belief:
$$\textit{Neural networks in computers can approximate any continuous function.}$$
However, such a common belief does not naturally follow from the existing results. This is because existing theory uses real numbers and exact mathematical operations in its analyses, but computers can only represent a finite subset of reals and perform inexact operations due to the bounded precision and running time. 

Such a discrepancy between theory and practice becomes more serious
when networks are implemented with low-precision operations, where the round-off error is large. 
In particular, with the recent exponential growth of the number of parameters in state-of-the-art networks, low-precision parameters and operations are widely adopted \cite{hubara2018quantized,ligptaq,spinquant}.
Network quantization is a popular method that can reduce the memory and computation costs of networks by using low-precision fixed-point parameters and low-cost integer operations \cite{huai2023crossbar,jacob2018quantization,jin2021f8net,li2023vit,sari2021irnn,wang2022niti,yao2021hawq,zhao2021efficient}.  %
Surprisingly, although quantized networks using fixed-point arithmetic have discrete parameters and non-negligible round-off errors in their evaluation, they have successfully reduced memory and computation costs while preserving the performance of their unquantized counterparts.

Only a few works have investigated the expressive power of networks using discrete parameters and/or inexact machine operations. %
For example, Ding et al.\ \cite{ding2018universal} show networks using quantized (i.e., discrete) weights and exact mathematical operations can universally approximate.
In addition, Gonon et al.\ \cite{gonon2023approximation} analyze the approximation error incurred by quantizing real network parameters through nearest rounding. 
However, almost all existing works consider operations without error (i.e., exact), and thus, are not applicable to quantized networks using fixed-point operations.
The only exceptions are recent works that show networks under floating-point arithmetic (i.e., floating-point parameters and floating-point operations) can represent almost all floating-point functions \cite{park2024expressive,hwang2025floating,hwang2025floatingpoint}.
Nevertheless, these results assume floating-point arithmetic only, and hence, it does not apply to quantized networks using fixed-point arithmetic. Namely, what modern quantized networks using general activation functions can or cannot express is still unknown.

In this work, we study the expressive power of quantized networks using fixed-point arithmetic only.
Specifically, we consider networks using $(p+1)$-bit fixed-point arithmetic that consists of the set of fixed-point numbers
\begin{align}
\Qps\defeq\left\{k/s:k\in\bbZ, -2^p+1\le k\le2^p-1\right\} ,
\end{align}
for some scaling factor $s\in\bbN$
and the fixed-point operations derived from rounding $\round{x}$ of $x\in\bbR$, which denotes an element in $\Qps$ closest to $x$ (see \cref{sec:fixed-point} for the precise definition and the tie-breaking rule).
Given such fixed-point arithmetic, a pointwise activation function $\sigma$, and affine transformations $\rho_1,\dots,\rho_L$, we consider a ``$\sigma$ quantized network'' $f:\Qps^d\to\Qps$ defined as
\begin{align}
f(\bfx)\defeq\round{\rho_L}\circ\round{\sigma}\circ\round{\rho_{L-1}}\circ\round{\sigma}\circ\cdots\circ\round{\sigma}\circ\round{\rho_{1}}(\bfx),\label{eq:nn-intro}
\end{align}
where $\round{\rho_l}$ and $\round{\sigma}$ denote the functions that round the output elements of $\rho_l$ and $\sigma$ to $\Qps$, respectively (refer to \cref{sec:nn} for precise definition).
\tcv{Namely, our computational model quantizes the output of each activation function and affine transformation, but does not quantize operations inside them (e.g., additions and multiplications in affine transformations).} 
We note that such quantized networks have been used in the network quantization literature \cite{jacob2018quantization,yao2021hawq}. Under this setup, we investigate the following \emph{universal representability}:
\begin{gather*}
\textit{Can quantized networks represent all functions from $\Qps^d$ to $\Qps$?}
\end{gather*}
Our contributions can be summarized as follows.
\begin{itemize}
\item We first provide a necessary condition on activation functions and fixed-point arithmetic (i.e., $\Qps$) for universal representation of quantized networks in \cref{thm:necessity}. Unlike classical results that show networks using any non-affine continuous activation function can universally approximate under real parameters and exact mathematical operations \cite{leshno1993multilayer,kidger2020universal}, our necessary condition shows that quantized networks using rounded versions of some non-affine continuous functions cannot universally represent.
\item We then provide a sufficient condition on activation functions and $\Qps$ for universal representation in \cref{thm:sufficiency}. 
We show that rounded versions of various practical activation functions such as $\Sigmoid$, $\relu$, $\elu$, $\SoftPlus$, $\SiLU$, $\Mish$, and $\GeLU$\footnote{See \cref{sec:setup} for the definitions of activation functions.} satisfy our sufficient condition; that is, practical quantized networks are capable of performing a given target task. 
Interestingly, the identity activation function (i.e., $\sigma(x)=x$) also satisfies our sufficient condition, i.e., it is capable of universal representation unlike networks using real parameters and exact mathematical operations. 
\tcv{This is because rounded affine transformations ($\round{\rho_i}$ in \cref{eq:nn-intro}) are non-affine under exact operations in general due to the round-off error.}
We note that a similar observation has been recently made in networks using floating-point arithmetic \cite{hwang2025floating,hwang2025floatingpoint}.
\item We show that under a mild condition on activation functions (e.g., there exists $x$ such that $\sigma(x)=0$), our necessary condition coincides with our sufficient condition (\cref{cor:nece_suff,lem:nece_suff}).
This implies that for a large class of activation functions, our results (\cref{thm:necessity,thm:sufficiency}) provide a necessary and sufficient condition for universal representation.
\item We further extend our results to quantized networks with binary weights, i.e., all weights in the networks are in $\{-1,1\}$, and show that quantized networks with binary weights can universally represent for rounded versions of $\Sigmoid$, $\relu$, $\elu$, $\SoftPlus$, $\SiLU$, $\Mish$, and $\GeLU$. This setup has been widely studied in network quantization literature due to the low multiplication cost with $1$ and $-1$ \cite{ma2024era, wang2023bitnet}.
\item %
We show that a na\"ive quantization of real parameters in a network may incur a large error; hence, existing universal approximation results do not directly extend to quantized networks. We also quantitatively analyze the size of networks for approximating a target function using our results.
\end{itemize}

\subsection{Organization}
The problem setup and relevant notations are presented in Section \ref{sec:setup}.
In \cref{sec:main_results}, we present our principal findings on universal representation property of quantized networks.
Specifically, we provide necessary and/or sufficient conditions on activation function and fixed-point arithmetic for universal representation; we then extend these results to networks with binary weights.
We next discuss a na\"ive quantization method,  quantitatively analyze our results, and compare our results with results under floating-point arithmetic in \cref{sec:discussions}.
We provide formal proofs of our findings in \cref{sec:proofs} and conclude our paper in \cref{sec:conclusion}.

\section{Problem setup and notations}\label{sec:setup}

\subsection{Notations}\label{sec:notation}
We introduce frequently used notations here.
We use $\mathbb{N}$, $\bbZ$, $\mathbb R$, and $\mathbb{R}_{\ge 0}$ to denote the set of natural numbers, the set of integers, the set of real numbers, and the set of positive real numbers, respectively. We also use $\mathbb{N}_0\defeq\mathbb{N}\cup \{0\}$. 
For $n,m\in\bbN_0$, we define $[n]\defeq \left\{ 1,2,\dots, n\right\}$, i.e., $[0]=\emptyset$.  %
For $a,b\in \bbR$, an interval $[a,b]$ is defined as $[a,b]\defeq \{x\in \bbR: a\le x\le b\}$. 
We generally use $a, b, c, \dots$ to represent scalar values and $\bfa,\bfb,\bfc,\dots$ to denote column vectors.
\tcv{For a natural number $r\in \bbN$ and integers $a,b$, $a\equiv b \Mod{r}$ if and only if $r|a-b$, which means that their difference is a multiple of $r$.
We also use$\modd{r}$ as the remainder operator from $\bbZ$ to $[r-1]\cup\{0\}$.
Thus, $c = a \modd{r}$ is the unique number satisfying $c\in [r-1]\cup\{0\}$ and $c\equiv a \Mod{r}$.  }

For a vector $\bfx\in\bbR^n$ and an index $i\in[n]$, we use $x_i$ to represent the $i$-th coordinate of $\bfx$.
Likewise, for a function $f:\bbR^n\to\bbR^m$, we use $f(\bfx)_i$ to denote the $i$-th coordinate of $f(\bfx)$.
For a function $\sigma:\bbR\to\bbR$, we often use $\sigma$ with a vector-valued input (e.g., $\sigma(\bfx)$ for some $\bfx \in\bbR^n$) to denote its coordinate-wise application (i.e., $\sigma(\bfx)=(\sigma(x_1),\dots,\sigma(x_n))$).
We use the big-O notation to hide absolute constants only.

For a vector $\bfx\in\bbR^n$, $\dim(\bfx)$ denotes the dimensionality of the vector $\bfx$, i.e., $\dim(\bfx)=n$.
For a set $A$, $|A|$ denotes the cardinality of $A$.
For a vector $\bfx\in\bbR^n$ and a set $\mcS\subset \bbR^n$, we define an \emph{indicator function} $\indc{\mcS}{\bfx}$ as
\begin{align*}
\indc{\mcS}{\bfx}\defeq \begin{cases}
1~&\text{if}~\bfx\in\mcS,\\
0~&\text{if}~\bfx\notin\mcS.
\end{cases}
\end{align*}
We define the \emph{affine transformation} as follows:
for $n\in\bbN$, $k\in[n]$, $\bfx = (x_1, \dots, x_n)\in \bbR^{n}$, $\bfw = (w_1,\dots, w_k)\in \bbR^k$, $b\in\bbR$, and $\mcI = \{i_1, \dots, i_k\}\subset [n]$ with $i_1<\cdots<i_k$, $\aff(\,\cdot\,;\bfw,b,\mcI): \bbR^{n}\rightarrow \bbR$ is defined as
\begin{equation}
    \aff(\bfx; \bfw,b, \mcI)\defeq b+\sum_{j=1}^kx_{i_j}w_j.\label{eq:affine-def}
\end{equation}
We lastly provide mathematical definitions of popular activation functions:
\begin{itemize}
    \item $\Sigmoid(x) \defeq \frac{1}{1+e^{-x}}$,
    \item $\relu(x) \defeq \max(0,x)$,
        \item $        \ELU(x) \defeq \begin{cases}
            x  &\text{ if } x\ge 0,
            \\ \exp(x)-1 &\text{ if } x< 0,
        \end{cases}$
        \item $\SiLU(x) \defeq \frac{x}{1 + e^{-x}}$, %
        \item $\SoftPlus(x) \defeq \log(1+ \exp(x))$,
        \item $\Mish(x) \defeq x\tanh(\SoftPlus(x))$, %
        \item $\GeLU(x) \defeq \frac{x}{2}\left(1+ \erf\left(\frac{x}{\sqrt{2}}\right) \right) = \frac{x}{2}\left(1+ \frac{2}{\sqrt{\pi}} \int_0^{x/ \sqrt{2} } e^{-t^2}dt  \right)$,
        \item $\hardtanh(x) \defeq \begin{cases}
            1  &\quad \text{ if } x\ge 1, \\
            x  &\quad \text{ if } 0< x <1,\\
            0  &\quad \text{ if } x\le 0.
        \end{cases}$
\end{itemize}
\tcv{The graphs of these activation functions are depicted in \cref{fig:activation_functions}.}
\begin{figure}
    \centering
    \includegraphics[width=\linewidth]{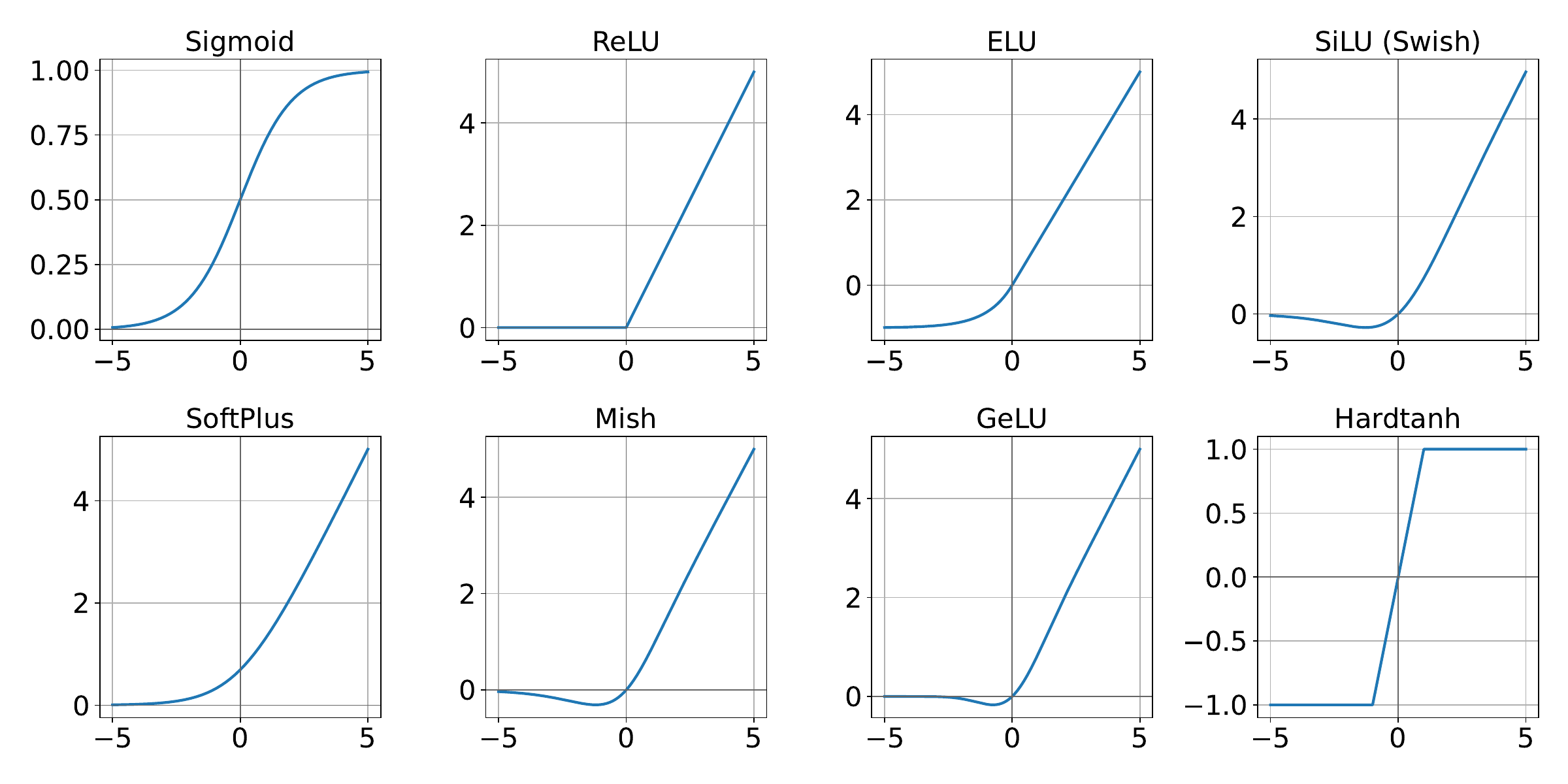}
    \caption{Plots of the activation functions}
    \label{fig:activation_functions}
\end{figure}
\subsection{Fixed-point arithmetic}\label{sec:fixed-point}
In this section, we introduce fixed-point arithmetic that we focus on; such arithmetic is used in \cite{jacob2018quantization,yao2021hawq}. %
In particular, we consider the following set of fixed-point numbers: for $p\in\bbN\cup\{\infty\}$ and $s\in \mathbb{N}$, 
    \begin{equation}
        \Qps\defeq
        \begin{cases}
            \left\{ q/s : q\in [-2^{p}+1, 2^{p}-1]\cap\bbZ\right\}~&\text{if}~p<\infty,\\
            \left\{ q/s : q\in \bbZ\right\}~&\text{if}~p=\infty.
        \end{cases}\label{eq:fixedpoint}
    \end{equation}
Throughout this paper, we define $q_{p,s,\max}$ $\defeq$ $\max \Qps$ = $\frac{2^p-1}s$, and assume $q_{p,s,\max}$  $\ge 1$ , i.e., $-1,1\in\Qps$ and $s\le 2^p-1$.
If $p,s$ are apparent from the context, we drop $p,s$ and use $q_{\max}$ to denote $q_{p,s,\max}$. %
    
For any real number $x\in \bbR$, we define the rounding operation $\round{\cdot}_{\Qps}$ as follows: %
\begin{equation*}
    \round{x}_{\Qps} \defeq \argmin_{y \in \Qps} |x-y|.
\end{equation*}
When ties occur (i.e., there are $u,v\in\Qps$ such that $u\ne v$ and $|x-u|=|x-v|=\min_{y \in \Qps} |x-y|$), we choose the number with the larger absolute value in the set $\argmin_{y \in \Qps} |x-y|$ (i.e., $\round{x}_{\Qps}=u$ if $|u|>|v|$) following  \cite{jacob2018quantization}.\footnote{Such a rounding scheme is often referred to as ``away from zero''.} %
To simplify notation, we frequently omit $\Qps$ and use $\round{x}$ to denote $\round{x}_{\Qps}$ if $\Qps$ is clear.

Given a function $f:\bbR^n\to\bbR^m$ and $\Qps$, we define its \emph{quantized version} $\round{f}_{\Qps}:\bbR^n\to\Qps^m$ with respect to $\Qps$ as follows: for $\bfx\in\bbR^n$
\begin{align}
\round{f}_{\Qps}(\bfx) \defeq \left(\round{f(\bfx)_1}_{\Qps},\dots,\round{f(\bfx)_m}_{\Qps}\right)\label{eq:round-fcn}.
\end{align}
Here, we frequently omit $\Qps$ and use $\round{f}$ to denote $\round{f}_{\Qps}$ if $\Qps$ is apparent from the context.

\subsection{Neural networks}\label{sec:nn}

Let $L\in \bbN$ be the number of layers, $N_0=d\in\bbN$ be the input dimension, $N_L\in\bbN$ be the output dimension, and $N_\ell\in\bbN$ be the number of hidden neurons (i.e., hidden dimension) at layer $\ell$ for all $\ell\in[L-1]$. For each $l\in[L]$ and $i\in[N_l]$, let %
$\mcI_{l,i}\subset[N_{l-1}]$ be the set of indices of hidden neurons in the layer $l-1$ that are used for computing the $i$-th neuron of the layer $l$ via some affine map characterized by parameters $\bfw_{l,i}\in\bbR^{|\mcI_{l,i}|}$, and $b_{l,i}\in\bbR$ (see \cref{eq:affine-def}).
Let $\mathcal{I}\defeq(\mathcal I_{1,1},\dots,\mathcal I_{1,N_1},\dots,\mathcal I_{L,1},\dots,\mathcal I_{L,N_L})$ and $\theta\in\bbR^I$ be the concatenation of all $\bfw_{l,i}$, and $b_{l,i}$ where the \emph{total number of parameters} $I$ is defined as
\begin{align*}
I&\defeq\sum_{l=1}^L\sum_{i=1}^{N_l}(\dim(b_{l,i})+\dim(\bfw_{l,i}))=\sum_{l=1}^LN_l+\sum_{l=1}^L\sum_{i=1}^{N_l}|\mcI_{l,i}|.
\end{align*}
We define a \emph{neural network} $g_{\theta,\mcI}:\bbR^{d}\to\bbR$ using $\sigma:\bbR\to\bbR$ as its activation function as follows: for $\bfx\in\bbR^{d}$,
\begin{align}
g_{\theta,\mcI}(\bfx)\defeq\rho_L\circ\sigma\circ\rho_{L-1}\circ\sigma\circ\cdots\circ\rho_2\circ\sigma\circ\rho_1(\bfx),\label{eq:nn-def}
\end{align}
where $\rho_l:\bbR^{N_{l-1}}\to\bbR^{N_l}$ is defined as
\begin{equation*}
\rho_l(\bfz)\defeq\big(\aff(\bfz;\bfw_{l,1},b_{l,1},\mcI_{l,1}),\dots,\aff(\bfz;\bfw_{l,N_l},b_{l,N_l},\mcI_{l,N_l})\big), %
\end{equation*}
for all $l\in[L]$ (see \cref{eq:affine-def} for the definition of $\aff$). 
We typically consider the scalar output (i.e., $N_L=1$) unless otherwise noted.
Under the same setup, given $\Qps$, we also define a \emph{quantized neural network} $f_{\theta,\mcI}(\,\cdot\,;\Qps):\Qps^{d}\to\Qps$ that has the following properties:

\begin{itemize}
    \item $f_{\theta,\mcI}$ has quantized weights $\bfw_{l,i}\in\Qps^{|\mcI_{l,i}|}$ and biases $b_{l,i}\in\bbQ_{\infty,s}$ for all $l\in[L]$ and $i\in[N_l]$ in all affine transformations and
    \item it has quantized outputs for all affine transformations and activation functions.
\end{itemize}
Namely, a quantized network $f_{\theta,\mcI}$ can be expressed as
\begin{align}
f_{\theta,\mcI}(\bfx;\Qps)\defeq\round{\rho_L}_{\Qps}\!\circ\round{\sigma}_{\Qps}\!\circ\round{\rho_{L-1}}_{\Qps}\!\circ\cdots\circ\round{\sigma}_{\Qps}\!\circ\round{\rho_1}_{\Qps}(\bfx)\label{eq:qnn-def},
\end{align}
where $\round{\rho_l}_{\Qps}$ and $\round{\sigma}_{\Qps}$ are quantized versions of $\rho_l$ and $\sigma$ as in \cref{eq:round-fcn}.
We note that we do not perform rounding after each of the elementary operations (e.g., addition or multiplication) in an affine transformation but perform a single rounding after computing the whole affine transformation. 
This assumption reflects practical implementations of quantization networks that typically use quantized weights/activation function values but high-precision intermediate operations \cite{jacob2018quantization,ligptaq,spinquant}.
In addition, since quantized networks often use high-precision bias parameters (i.e., $b_{l,i}$) while weights are in low-precision (i.e., $\bfw_{l,i}$) to reduce memory and multiplication costs \cite{jacob2018quantization}, we also assume high precision biases (i.e., $b_{l,i}\in\Qis$) and low-precision weights (i.e., $\bfw_{l,i}\in\Qps^{|\mcI_{l,i}|}$) in our quantized network definition \cref{eq:qnn-def}.

We call a quantized network defined in \cref{eq:qnn-def} as a ``$\sigma$ quantized network under $\Qps$''. To simplify notation, we frequently omit $\mcI$ (and $\theta$) and use $f_\theta$ (or $f$) to denote $f_{\theta,\mcI}$.
We say ``$\sigma$ quantized networks can universally represent'' if for any $d\in\bbN$ and for any $f^*:\Qps^d\to\Qps$, there exists a $\sigma$ quantized network $f$ such that $f=f^*$ on $\Qps^d$.

\tcv{
For $m\in\bbN$ and $x\in\bbR$, we use $\sum_m x$ to denote $m$ times repeated additions of $x$, i.e., $\sum_mx=mx$.
Specifically, we use $\sum_m x$ (instead of $mx$) to explicitly state that the resulting value can be implemented using $m$ additions of (the outputs of) neural networks.
Consider a quantized neural network $f:\Qps^{d_0}\to \Qps^{d_1}$ and affine transformations $\rho_1:\Qps^{d_1}\to \Qps^{d_2}$, $\rho_2:\Qps^{d_2}\to \Qps^{d_3}$ for $d_0,d_1, d_2, d_3\in\bbN$.
Define $g$ as 
\begin{equation*}
    g\defeq \rho_2\circ\round{\sigma}\circ\round{\rho_1}\circ\round{\sigma}\circ f.
\end{equation*}
Then, $\sum_m g$ can be implemented by a quantized neural network of the same depth: $\rho'_1: \Qps^{d_1}\to \Qps^{d_2m}$ is the concatenation of  $m$ copies of $\rho_1$ and $\rho'_2: \Qps^{d_2m}\to \Qps^{d_3}$ performs the summation of the $m$ concatenated inputs (i.e., the $m$ outputs of $\rho_1$ applied in parallel).
Thus, 
\begin{equation}\label{eq:mtimes_summation}
    \sum_m g = \rho'_2\circ\round{\sigma}\circ\round{\rho'_1}\circ\round{\sigma}\circ f.
\end{equation}
}

\section{Universal representation via quantized networks}\label{sec:main_results} %
In this section, we analyze the universal representation property of quantized networks. %
In \cref{sec:counterexample}, we provide a necessary condition on activation functions and $\Qps$ for universal representation. %
In \cref{sec:suff_cond}, we provide a sufficient condition on activation functions for universal approximation and show that various practical activation functions satisfy our sufficient condition (e.g.,  $\Sigmoid$, $\relu$, $\elu$, $\SoftPlus$, $\SiLU$, $\Mish$, and $\GeLU$) for any $\Qps$ with $p\ge3$.
We also show that our sufficient condition coincides with the necessary condition introduced in \cref{sec:counterexample} (i.e., our sufficient condition is necessary) for a large class of practical activation functions including $\Sigmoid$, $\relu$, $\elu$, $\SoftPlus$, $\SiLU$, $\Mish$, and $\GeLU$.
We further extend our results to quantized networks with binary weights (i.e., $\bfw_{l,i}\in\{-1,1\}^{|\mcI_{l,i}|}$) in \cref{sec:binary}.
Proofs are deferred to \cref{sec:proofs}.
Throughout this section, we use $\round{\cdot}$ to denote $\round{\cdot}_{\Qps}$ to simplify notation.

\subsection{Necessary condition for universal representation}\label{sec:counterexample}

Classical universal approximation theorems state that under real parameters and exact mathematical operations, neural networks with two layers and any non-polynomial activation function can universally approximate \cite{leshno1993multilayer}. Furthermore, for deeper networks, it is known that networks using non-affine polynomial activation functions can also universally approximate \cite{kidger2020universal}.
This implies that under real parameters and exact mathematical operations, any non-affine continuous activation function suffices for universal approximation.
However, this is not the case for the quantized networks. In this section, we show that there are non-affine activation functions and $\Qps$ for which quantized networks cannot universally represent.
In particular, we study such activation functions and $\Qps$ by formalizing a necessary condition on the activation function $\sigma$ and $\Qps$ for universal approximation.

Recall a $\sigma$ quantized network $f:\Qps^d\to\Qps$ defined in \cref{eq:qnn-def}:
\begin{align*}
f(\bfx;\Qps)=\round{\rho_L}\circ\round{\sigma}\circ\round{\rho_{L-1}}\circ\cdots\circ\round{\sigma}\circ\round{\rho_1}(\bfx).
\end{align*}
If $f$ has more than one layer, since \tcv{the output dimension of $f$ is one and}, the output of $\round{\sigma}\circ\round{\rho_{L-1}}\circ\cdots\circ\round{\sigma}\circ\round{\rho_1}$ is always in $\round{\sigma}(\Qps)$,

given the last layer weights $\bfw_{L,1}=(w_{L,1,1},\dots,w_{L,1,|\mcI_{L,1}|})\in\Qps^{|\mcI_{L,1}|}$
and bias $b_{L,1}\in\bbQ_{\infty,s}$ for $\round{\rho_L}$, the output of $f$ always satisfies 
\begin{align}
f(\bfx;\Qps)&\in\left\{\round{b_{L,1}+\sum_{j=1}^{|\mcI_{L,1}|}w_{L,1,j}z_{j}}:z_j\in\round{\sigma}(\Qps)\right\},\label{eq:nn-output}
\end{align}
for all $\bfx\in\Qps^d$.
To introduce our necessary condition, for each $b\in\Qis$, we define
\begin{align*}
\mcN_{\sigma,p,s,b}\defeq\left\{\round{b+\sum_{i=1}^nw_ix_i}:n\in\bbN_0,w_i\in\Qps,x_i\in\round{\sigma}(\Qps)~\forall i\in[n]\right\}.
\end{align*}
Then, by \cref{eq:nn-output}, one can easily observe that $f(\Qps^d;\Qps)\subset\mcN_{\sigma,p,s,b_{L,1}}$, which implies that for any $\sigma$ quantized network $\tcv{f}$ under $\Qps$ with more than one layer, we have
\begin{align}
\tcv{f}(\Qps^d;\Qps)\subset\mcN_{\sigma,p,s,b}\label{eq:nn-output2}
\end{align}
for some $b\in\Qis$.

We are now ready to introduce our necessary condition. Suppose that $\sigma$ quantized networks under $\Qps$ with more than one layer can universally represent. 
Let $f^*:\Qps^d\to\Qps$ be a target function such that $f^*(\Qps^d)=\Qps$ (e.g., $f^*(\bfx)=x_1$). 
Since $\sigma$ quantized networks can universally represent, there exists a $\sigma$ quantized network $\tcv{f}$ with more than one layer such that $\tcv{f}=f^*$. 
Furthermore, since $f^*(\Qps^d)=\Qps$, by \cref{eq:nn-output2}, we must have 
\begin{align*}
\Qps=f^*(\tcv{\Qps^d})=\tcv{f}(\tcv{\Qps^{d}};\Qps)\subset\mcN_{\sigma,p,s,b}\subset\Qps,
\end{align*}
for some $b\in\Qis$; that is, $\mcN_{\sigma,p,s,b}=\Qps$ for some $b\in\Qis$, which is our necessary condition. 
We formally state this necessary condition and extend our analysis to include one-layer networks in the following theorem.
For completeness, we provide the detailed proof of \cref{thm:necessity} in \cref{sec:pfthm:necessity}. %

\begin{theorem}\label{thm:necessity}
Let $\sigma:\bbR\to\bbR$ and $p,s\in\bbN$. If $\sigma$ quantized networks under $\Qps$ can universally represent, then there exists $b\in\Qis$ such that
\begin{align}
\mcN_{\sigma,p,s,b}=\Qps.\label{eq:necessity}
\end{align}
\end{theorem}

\cref{thm:necessity} states that if $\mcN_{\sigma,p,s,b}\ne\Qps$ for all $b\in\Qis$, then $\sigma$ quantized networks cannot universally represent. 
\tcv{Note that this necessary condition is independent of the input dimension $d$.}
Based on \cref{thm:necessity}, we characterize a class of activation functions and $\Qps$ that cannot universally represent in the following lemma. We defer the proof of \cref{lem:necessity} in  \cref{sec:pflem:necessity}.
\begin{lemma}\label{lem:necessity}
Let $\sigma:\bbR\to\bbR$ and $p,s\in\bbN$.
Suppose that there exists a natural number 
$r \in \bbN$ such that $ 3 \le r$,  $s r\in \Qps$, $\round{\sigma}(x)\in \bbZ$, and $r \mid \round{\sigma}(x)$ for all $x\in \Qps$.\footnote{$a \mid b$ denotes that $b$ is divisible by $a$ for $a,b \in \bbZ$.} If $r=3$, we further assume that $2\mid p$. Then, $\mcN_{\sigma,p,s,b}\ne\Qps$ for all $b\in\Qis$.
\end{lemma}

While any non-affine continuous activation function suffices for universal approximation under real parameters and exact mathematical operations, \cref{lem:necessity} implies that there can be non-affine $\sigma$ and $\Qps$ such that $\sigma$ quantized networks cannot universally represent. For example, the following corollary shows one such case, with $r=5$.

\begin{corollary}\label{cor:necessity}
 For $p,s\in\bbN$ such that $p \ge 2$, $5s\in \Qps$ and for $\sigma(x) = 5s\times \hardtanh(x)$, it holds that $\mcN_{\sigma,p,s,b}\ne\Qps$ for all $b\in\Qis$.
\end{corollary}

\subsection{Sufficient condition for universal representation}\label{sec:suff_cond}

In this section, we introduce a sufficient condition for universal representation of quantized networks. That is, if an activation function and $\Qps$ satisfy our sufficient condition, then we can represent any target continuous function from $\Qps^d$ to $\Qps$ via some quantized network. To this end, we explicitly construct indicator functions with some coefficient $\gamma\in\Qps$ using quantized networks: for a $d$-dimensional \emph{quantized cube} $\mcC=(\prod_{i=1}^d[\alpha_i,\beta_i])\cap\Qps^d$, 
\begin{align*}
\gamma\times\indc{\mcC}{\bfx}=\begin{cases}
\gamma~&\text{if}~\bfx\in\mcC,\\
0~&\text{if}~\bfx\notin\mcC.
\end{cases}
\end{align*}
We then represent a target function (say $f^*:\Qps^d\to\Qps$) as
\begin{align}
f^*(\bfx)=\round{\sum_{i=1}^k\gamma_i\times\indc{\mcC_i}{\bfx}} \tcv{\quad \text{for all } \bfx\in \Qps^d},\label{eq:univ_approx_indc}
\end{align}
and for some partition $\{\mcC_1,\dots,\mcC_k\}$ of $\Qps^d$,
where each $\mcC_i$ is a $d$-dimensional quantized cube of a small sidelength (e.g., $\mcC_i$ is a singleton set), and for some $\gamma_i\in\Qps$ representing $f^*(\mcC_i)$.

From \cref{eq:univ_approx_indc}, it is easy to observe that if we can implement 
 \tcv{
\begin{align*}
\mathbb{Q}_{p,s}^d &\longrightarrow \mathbb{Q}_{p,s},\\  \mathbf{x} &\longmapsto \gamma \times \indc{\mcC}{\bfx}       ,
\end{align*}}
for arbitrary $\gamma\in\Qps$ using a $\sigma$ quantized network, then $\sigma$ quantized networks can universally represent. 
However, as we observed in the necessary condition for universal representation in \cref{thm:necessity}, not all activation functions can implement every $\gamma\times\indc{\mcC}{\bfx}$ (if they could, quantized networks using any activation function would universally represent). Hence, in this section, we derive a sufficient condition on activation functions $\sigma$ and $\Qps$ under which we can implement $\gamma\times\indc{\mcC}{\bfx}$ for all $\gamma\in\Qps$.

The rest of this section is organized as follows: we first introduce a class of activation functions and $\Qps$ that we focus on; then, we characterize a class of $\gamma\in\Qps$ such that $\sigma$ quantized network can implement $\gamma\times\indc{\mcC}{\bfx}$.
Using this, we next formalize our sufficient condition. We also verify whether quantized networks under practical activation functions and quantization setups can universally represent using our sufficient condition. We lastly discuss the necessity of our sufficient condition.

\subsubsection{Activation functions of interest}
We primarily consider activation functions and $\Qps$ satisfying the following condition. 
\begin{condition}\label{cond:sufficency}
For an activation function $\sigma:\bbR\to\bbR$ and $\Qps$, there exist $\alpha,\beta\in\{-1,1\}$, $\rho:\bbR\to\bbR$, and $z\in\Qps$ satisfying the following: 
\begin{itemize}
\item $\sigma(x)=\alpha\rho(\beta x)$,
\item $z\ne\min\Qps$, %
\item $\round{\rho}(x)=\max\round{\rho}(\Qps)$ for all $x\in\Qps$ such that $x\ge z$, and
\item $\round{\rho}(x)<\max\round{\rho}(\Qps)$ for all $x\in\Qps$ such that $x<z$.
\end{itemize}
\end{condition}
\cref{cond:sufficency} with the case $\alpha=\beta=1$ characterizes a class of activation functions and $\Qps$ such that 
\begin{itemize}
\item $\round{\sigma}$ is non-constant on $\Qps$ and
\item $\round{\sigma}(x)=\max_{z\in\Qps}\round{\sigma}(z)$ if and only if $x\in[z,\max\Qps]\cap\Qps$.
\end{itemize}
Compared to the $\alpha=\beta=1$ case, different values of $\alpha$ and $\beta$ change the \emph{maximum} to the \emph{minimum} and the interval $[z,\max\Qps]\cap\Qps$ to the interval $[\min\Qps,z]\cap\Qps$, respectively.

It is easy to observe that any monotone activation function $\sigma$ with non-constant $\round{\sigma}$ satisfies \cref{cond:sufficency}, e.g., $\relu$, leaky-$\relu$, $\SoftPlus$, $\Sigmoid$, etc.
We formally present this observation in \cref{lem:monotone}. The proof is provided in \cref{sec:pflem:monotone}.

\begin{lemma}\label{lem:monotone}
Let $p,s\in\bbN$. If $\sigma:\bbR\to\bbR$ is monotone and there exist $x,y\in\Qps$ such that $\round{\sigma}(x)\ne\round{\sigma}(y)$ (i.e., $\round{\sigma}$ is non-constant on $\Qps$), then $\sigma$ and $\Qps$ satisfy \cref{cond:sufficency}.
\end{lemma}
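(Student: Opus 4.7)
The plan is to reduce both the non-decreasing and non-increasing cases to a canonical non-decreasing setup by a suitable choice of $\alpha,\beta\in\{-1,1\}$, and then to identify $z$ as the smallest element of $\Qps$ at which the quantized activation attains its maximum.

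First, I would split on the direction of monotonicity. If $\sigma$ is non-decreasing, take $\alpha=\beta=1$ and $\rho=\sigma$. If $\sigma$ is non-increasing, take $\alpha=1$, $\beta=-1$ and $\rho(x)\defeq\sigma(-x)$; then $\rho$ is non-decreasing as a composition of two non-increasing maps, and the identity $\sigma(x)=\alpha\rho(\beta x)$ holds by construction. Since $\Qps$ is symmetric about $0$, $\round{\rho}(\Qps)$ equals $\round{\sigma}(\Qps)$ as a set in either case, so the hypothesis that $\round{\sigma}$ is non-constant on $\Qps$ transfers to $\round{\rho}$.

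Next, I would note that the rounding map $\round{\cdot}_{\Qps}:\bbR\to\Qps$ is itself non-decreasing: between consecutive tie points it is constant, and at each tie point $(2k+1)/(2s)$ the away-from-zero rule either keeps the value or jumps up by $1/s$, so composition with the non-decreasing $\rho$ makes $\round{\rho}:\Qps\to\Qps$ non-decreasing. Because $p,s\in\bbN$ makes $\Qps$ finite, $M\defeq\max\round{\rho}(\Qps)$ is attained, and I would set
\[
z\defeq\min\bigl\{x\in\Qps:\round{\rho}(x)=M\bigr\}.
\]
Non-constancy of $\round{\rho}$ yields some $x_0\in\Qps$ with $\round{\rho}(x_0)<M$, and monotonicity of $\round{\rho}$ forces $x_0<z$, giving $z>\min\Qps$, hence $z\ne\min\Qps$. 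The remaining two bullets of \cref{cond:sufficency} then follow immediately: for $x\in\Qps$ with $x\ge z$, monotonicity gives $\round{\rho}(x)\ge\round{\rho}(z)=M$, and maximality forces equality; for $x\in\Qps$ with $x<z$, the minimality in the definition of $z$ excludes $\round{\rho}(x)=M$, whence $\round{\rho}(x)<M$.

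The only subtlety I anticipate is verifying the monotonicity of the rounding map under the away-from-zero tie-breaking rule of \cref{sec:fixed-point}; this is a short case check at a single tie point. Otherwise, the argument reduces to straightforward bookkeeping about monotone functions on a finite totally ordered set, so I do not foresee a substantial obstacle.
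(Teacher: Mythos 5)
Your proof is correct and follows essentially the same route as the paper's: reduce the non-increasing case to a non-decreasing $\rho$ by a sign flip, then take $z$ to be the least point of $\Qps$ at which $\round{\rho}$ attains its maximum. The only (cosmetic) difference is that you flip the argument ($\beta=-1$) where the paper flips the output ($\alpha=-1$); your explicit check that away-from-zero rounding is itself non-decreasing is a detail the paper leaves implicit.
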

We note that popular non-monotone activation functions such as GELU, SiLU, and Mish also satisfy \cref{cond:sufficency} for all $\Qps$. 

\subsubsection{Implementing indicator functions via quantized networks}
We implement indicator functions using $\sigma$ quantized networks under $\Qps$ for $\sigma$ and $\Qps$ satisfying \cref{cond:sufficency}.
To describe a class of implementable indicator functions, we define the following sets: for $b\in\Qis$,
\tcv{
\begin{align*}
\mathcal V_{\sigma,p,s}&\defeq\left\{\round{\sigma}(x)-\round{\sigma}(y):x,y\in\Qps\right\},\\
\mcS_{\sigma,p,s}^\circ&\defeq\left\{\sum_{i=1}^nw_ix_i:n\in\bbN_0,w_i\in\Qps,x_i\in\mcV_{\sigma,p,s}~\forall i\in[n]\right\}.
\end{align*}}
$\mcV_{\sigma,p,s}$ is a set of all gaps between possible outputs of $\round{\sigma}$, and $\mcS_{\sigma,p,s}^\circ$ is a set of all linear combinations of elements in $\mcV_{\sigma,p,s}$.
Using this definition, we characterize a class of indicator functions (of the form $\gamma\times\indc{\mcC}{\bfx}$) via the following lemma.
We provide the proof of \cref{lem:indicator} in \cref{sec:pflem:indicator}. 
\begin{lemma}\label{lem:indicator}
Let $\sigma:\bbR\to\bbR$ and $p,s,d\in\bbN$ satisfying $s\le 2^p-1$.
Let $\alpha_1,\beta_1,\dots,\alpha_d,\beta_d\in\Qps$ such that $\alpha_i\le\beta_i$ for all $i\in[d]$ and let $\mcC=(\prod_{i=1}^d[\alpha_i,\beta_i])\cap\Qps^d$.
Suppose that $\sigma$ and $\Qps$ satisfy \cref{cond:sufficency}.
Then, for each $\gamma\in\mcS_{\sigma,p,s}^\circ$, there exist $d'\in\bbN$, an affine transformation $\rho:\bbR^{d'}\to\bbR$ with quantized weights and without bias (i.e., $\rho=\aff(\,\cdot\,;\bfw,0,\mcI)$ for some $\bfw\in\Qps^{|\mcI|}$ and $b=0\in\Qis$), and a three-layer $\sigma$ quantized network $f(\,\cdot\,;\Qps):\Qps^d\to\Qps^{d'}$ such that 
\begin{align}
    \rho\circ\round{\sigma}\circ f(\bfx;\Qps)= \gamma \times\indc{\mcC}{\bfx}\quad\forall\bfx\in\Qps.\label{eq:lem:indicator}
\end{align}
\end{lemma}

\cref{lem:indicator} states that if $\gamma\in\mcS_{\sigma,p,s}^\circ$, then $\gamma\times\indc{\mcC}{\bfx}$ can be implemented by a composition of a $\sigma$ quantized network, a quantized activation function, and a linear transformation $\rho$ with quantized weights and \emph{unquantized output}.
Here, we do not introduce the bias parameter in $\rho$ and do not quantize the output of $\rho$; this is because our final 
quantized network construction (say $f$) that represents a target function has the following form: for a partition $\mcC_1,\dots,\mcC_k$ of $\Qps$,
\begin{align}
f(\bfx;\Qps)=\round{b+\sum_{i=1}^k\gamma_i\times\indc{\mcC_i}{\bfx}}=\round{\sum_{i=1}^k(\gamma_i+b)\times\indc{\mcC_i}{\bfx}},
\label{eq:univ_approx_indc2}
\end{align}
as in \cref{eq:univ_approx_indc}. Note that the second equality holds since $\mcC_1,\dots,\mcC_k$ is a partition of $\Qps$.
Namely, we will add the bias parameter and quantize the final output after summing the indicator functions.

\subsubsection{Our sufficient condition}
To describe our sufficient condition for universal representation, we define
\begin{align*}
\mcS_{\sigma,p,s,b}&\defeq \left\{\round{z+b}\in\Qps:z\in\mcS_{\sigma,p,s}^\circ\right\}\\
&=\left\{\round{b+\sum_{i=1}^nw_ix_i}\in\Qps:n\in\bbN_0,w_i\in\Qps,x_i\in\mcV_{\sigma,p,s}~\forall i\in[n]\right\}.
\end{align*}
Given \cref{lem:indicator} and \cref{eq:univ_approx_indc2}, 
if $\{\mcC_1,\dots,\mcC_k\}$ is a partition of $\Qps^d$ where each $\mcC_i$ is a quantized cube, then we can construct a $\sigma$ quantized network $f$ of the following form: for any $\gamma_1,\dots,\gamma_k\in\mcS_{\sigma,p,s,b}$, %
\begin{align}
    f(\bfx;\Qps)&=\round{\sum_{i=1}^k\gamma_i\times\indc{\mcC_i}{\bfx}}=\sum_{i=1}^k\round{\gamma_i}\times\indc{\mcC_i}{\bfx}.\label{eq:univ_approx_indc3}
\end{align}
Here, the last inequality holds since $\mcC_i\cap\mcC_j=\emptyset$ if $i\ne j$.
Since $\round{\gamma_i}\in\mcS_{\sigma,p,s,b}$ if $\gamma_i-b\in\mcS_{\sigma,p,s}^\circ$, one can conclude that we can construct a $\sigma$ quantized network $f$ as in \cref{eq:univ_approx_indc3} using \cref{lem:indicator}, for any $\round{\gamma_i}\in\mcS_{\sigma,p,s,b}$.
Namely, if $\mcS_{\sigma,p,s,b}=\Qps$ for some $b\in\Qis$, then $\sigma$ quantized networks can universally represent by choosing proper $\{\mcC_1,\dots,\mcC_k\}$ (e.g., each $\mcC_i$ is a singleton set).
We formally present this sufficient condition in \cref{thm:sufficiency}, whose proof is provided in \cref{sec:pfthm:sufficiency}.

\begin{theorem}\label{thm:sufficiency}
Let $\sigma:\bbR\to\bbR$ and $p,s\in\bbN$ satisfying $s\le 2^p-1$. Suppose that $\sigma:\bbR\to\bbR$ and $\Qps$ satisfy \cref{cond:sufficency}. 
If there exists $b\in\Qis$ such that 
\begin{align*}
\mcS_{\sigma,p,s,b}=\Qps,
\end{align*} 
then $\sigma$ quantized networks under $\Qps$ can universally represent.
\end{theorem}
One representative activation function that satisfies the condition in \cref{thm:sufficiency} for all $p,s\in\bbN$ is the identity function $\sigma(x)=x$; in this case, $\mcS_{\sigma,p,s,0}=\Qps$.
This shows a gap between classical universal approximation results and ours; if a network uses real parameters and exact mathematical operations, then it can only express affine maps with the identity activation function, i.e., the network cannot universally approximate. 
Nevertheless, quantized networks with the identity activation function can universally represent as stated in \cref{thm:sufficiency}. This is because fixed-point additions and multiplications in quantized networks are non-affine due to rounding errors.
We note that a similar observation has been recently made under the floating-point arithmetic \cite{hwang2025floating, hwang2025floatingpoint}.

We next provide an easily verifiable condition for activation functions that guarantees $\mcS_{\sigma,p,s,b}=\Qps$ (i.e., the universal representation property).
We provide the proof of \cref{lem:sufficiency}  in \cref{sec:pflem:sufficiency}. 

\begin{lemma}\label{lem:sufficiency}
Let $\sigma:\bbR\rightarrow\bbR$ be a continuous function and natural numbers $p,s\in\bbN$ satisfying $s\le 2^p-1$.
If $\Qps$ and $\sigma$ satisfy one of the following conditions, then $\mcS_{\sigma,p,s,b}=\Qps$.
    \begin{enumerate}[leftmargin=0.4in]
        \item[(A1)] There exist $q_1, q_2\in \bbZ$ such that, ${-2^p+1} \leq q_1 < q_2\leq {2^p-1}$, $\sigma$ is differentiable on $\left(\frac{q_1}{s}, \frac{q_2}{s} \right)$, $|\sigma'(x)|< 1$, and $ |\sigma(x)| \leq \frac{2^p-1}{s}$ for $x \in \left(\frac{q_1}{s}, \frac{q_2}{s}\right)$, and
        \begin{equation*}
            \left|\sigma\left(\frac{q_2}{s}\right) - \sigma\left(\frac{q_1}{s}\right)\right|\geq \frac{1}{s}.
        \end{equation*}
        \item[(A2)] There exist $q_1, q_2\in \bbZ$ such that ${-2^p+1} \leq q_1 < q_2\leq {2^p-1}$, $\sigma$ is differentiable on $\left(\frac{q_1}{s}, \frac{q_2}{s}\right)$, $|\sigma'(x)|\leq 1$, and $0\leq \sigma(x)\leq \frac{2^p-1}{s}$ for $x \in \left( \frac{q_1}{s} ,\frac{q_2}{s}\right)$, and
        \begin{equation*}
            \left|\sigma\left(\frac{q_2}{s}\right) - \sigma\left(\frac{q_1}{s}\right)\right|\geq \frac{1}{s}.
        \end{equation*}
        \item[(A3)] There exist $q_1, q_2\in \bbZ$ such that ${-2^p+1} \leq q_1 < q_2\leq {2^p-1}$, $\sigma$ is differentiable on $\left(\frac{q_1}{s}, \frac{q_2}{s} \right)$, $1\leq \sigma'(x)\leq 2$, and  $|\sigma(x)| \leq \frac{2^p-1}{s}$ for  $ x \in \left( \frac{q_1}{s} , \frac{q_2}{s} \right)$, and
        \begin{equation*}
             \left|\sigma\left(\frac{q_2}{s}\right) - \sigma\left(\frac{q_1}{s}\right)\right| < \frac{2(q_2 - q_1) - 1}{s}.
        \end{equation*} 
    \end{enumerate}
\end{lemma}

Using \cref{lem:sufficiency}, we show that for many practical activation functions such that $\relu$, $\ELU$, $\SiLU$, $\Mish$, $\GeLU$, $\SoftPlus$, and $\Sigmoid$, we have $\Qps =  \mcS_{\sigma,p,s,b}$.
We provide the proof of \cref{lem:activation}  in \cref{sec:pflem:activation}.
\begin{lemma}\label{lem:activation}
Consider $p,s\in \bbN$ satisfying $s\le 2^p-1$.
For the activation functions $\relu$ and $\ELU$, $\Qps =  \mcS_{\sigma,p,s,b}$.
For $\SiLU$, $\Mish$, $\GeLU$, and $\Sigmoid$, if $p\ge3$, then $\Qps =  \mcS_{\sigma,p,s,b}$. For $\SoftPlus$, if $p \ge 4$ or $p=3$ and $ 1 \le s \le 5$, then $\Qps =  \mcS_{\sigma,p,s,b}$.
\end{lemma}
\tcv{In \cref{lem:activation}, we present conditions on $p$ and $s$ for various activation functions to universally represent. Specifically, the condition on $s$ is only to ensure $1\in\Qps$, which is sufficient for universal representation (see \cref{sec:binary} for more details).
In practice, researchers typically share a single $p$ for all layers but use different values of $s$ for different layers to optimize the quantization performance; our problem setup sharing a single $s$ for all layers can be viewed as a special case of this, i.e., our sufficiency results also extend to a practical setup if we can choose $s$ in each layer by ourselves.
For example, a conventional way to choose $s$ to cover the range of (full-precision) weights in an unquantized model. In this case, for the INT8 quantization ($p=7$) using PyTorch’s post-training static quantization framework, we found that $s$ ranges from 
$125$ to $480$ in the ResNet18 \cite{he2016deep}, 
$38$ to $351$ in the MobileNetV2  \cite{sandler2018mobilenetv2} pretrained on ImageNet \cite{deng2009imagenet}, and $12$ to $638$  in the DistilBERT \cite{sanh2019distilbert}
pretrained on BookCorpus \cite{bandy2021addressing}.

One notable empirical observation here is that while $1\in\Qps$ in the first layers, $\Qps$ may not contain $1$ from the second layer. Nevertheless, we can still simulate the weight of value one (or larger) by duplicating the same neurons, multiplying them the same weight (e.g., $1/s$) and adding them (e.g., $s$ times). %
We note that such a duplication strategy may not work in the first layer as we typically do not duplicate the inputs.}

\subsubsection{Our necessary and sufficient condition}
\cref{thm:sufficiency} states that $\mcS_{\sigma,p,s,b}=\Qps$ for some $b\in\Qis$ is \emph{sufficient} for universal representation, while \cref{thm:necessity} states that $\mcN_{\sigma,p,s,b}=\Qps$ for some $b\in\Qis$ is \emph{necessary}.
Hence, by combining these two results, one can observe that if $\mcN_{\sigma,p,s,b}=\mcS_{\sigma,p,s,b}$ for all $b$, then $\mcS_{\sigma,p,s,b}=\Qps$ for some $b$ is \emph{necessary and sufficient} for universal representation.
\begin{corollary}\label{cor:nece_suff}
Let $\sigma:\bbR\to\bbR$ and $p,s\in\bbN$ satisfying $s\le 2^p-1$. Suppose that $\sigma:\bbR\to\bbR$ and $\Qps$ satisfy \cref{cond:sufficency}.
If $\mcN_{\sigma,p,s,b}=\mcS_{\sigma,p,s,b}$ for all $b\in\Qis$, then $\sigma$ quantized networks can universally represent if and only if
there exists $b\in\Qis$ such that $\mcS_{\sigma,p,s,b}=\Qps$.
\end{corollary}

Then, when do we have $\mcN_{\sigma,p,s,b}=\mcS_{\sigma,p,s,b}$ for all $b\in\Qis$? To answer this question, we provide conditions on the activation function $\sigma$ and $\Qps$ that guaranty $\mcN_{\sigma,p,s,b}=\mcS_{\sigma,p,s,b}$ for all $b\in\Qis$.
We provide the proof of \cref{lem:nece_suff} in \cref{sec:pflem:nece_suff}.

\begin{lemma}\label{lem:nece_suff}
Let $\sigma:\bbR\to\bbR$ and $p,s\in\bbN$ satisfying $s\le 2^p-1$.
If there exists $x\in\Qps$ such that $\round{\sigma}(x)\in\mcV_{\sigma,p,s}$, then, $\mcN_{\sigma,p,s,b}=\mcS_{\sigma,p,s,b}$ for all $b\in\Qis$.
In particular, if there exists $x\in\Qps$ such that $\round{\sigma}(x)=0$, then, $\mcN_{\sigma,p,s,b}=\mcS_{\sigma,p,s,b}$ for all $b\in\Qis$.
\end{lemma}
One can easily verify that most practical activation functions whose value at zero is zero naturally satisfy the assumption of \cref{lem:nece_suff}.

\subsection{Universal representation with binary weights}\label{sec:binary}
Recent research has demonstrated that neural networks with parameters quantized to one-bit precision can achieve performance comparable to their full-precision counterparts while significantly reducing multiplication costs \cite{ma2024era, wang2023bitnet}.
In this section, we present both a necessary condition and a sufficient condition for networks with binary weights to possess universal representation property as in \cref{sec:counterexample,sec:suff_cond}. %

Recall a $\sigma$ quantized network $f:\Qps^d\to\Qps$ defined as in \cref{eq:qnn-def}:
\begin{align}
f=\round{\rho_L}\circ\round{\sigma}\circ\round{\rho_{L-1}}\circ\cdots\circ\round{\sigma}\circ\round{\rho_1}.\label{eq:b_qnn-def}
\end{align}
We say that ``$f$ has binary weights'' if all its weights in the affine transformations $\rho_l$ are binary.
Specifically, for each $l,i$,
\begin{equation*}
    \bfw_{l,i}\in\{-1,1\}^{|\mcI_{l,i}|}.
\end{equation*}
Here, we allow non-binary bias parameters, i.e., $b_{l,i}\in\Qis$ as in \cite{ma2024era, wang2023bitnet}.

As $\mcN_{\sigma,p,s,b}, \mcS_{\sigma,p,s}^\circ$, and $\mcS_{\sigma,p,s,b}$ for quantized networks (see \cref{sec:counterexample,sec:suff_cond}), we analogously define the sets $\mcB\mcN_{\sigma,p,s,b}$, $\mcB\mcS_{\sigma,p,s}^\circ$, and $\mcB\mcS_{\sigma,p,s,b}$ for quantized networks with binary weights as follows:
\begin{align*}
\mcB\mcN_{\sigma,p,s,b}&\defeq\left\{\round{b+\sum_{i=1}^nw_ix_i}:n\in\bbN_0, w_i\in \{-1, 1\},x_i\in\round{\sigma}(\Qps)~\forall i\in[n]\right\},\\
\mcB\mcS_{\sigma,p,s}^\circ&\defeq\left\{\sum_{i=1}^nw_ix_i:n\in\bbN_0, w_i\in\{-1, 1\},x_i\in\mcV_{\sigma,p,s}~\forall i\in[n]\right\},\\
\mcB\mcS_{\sigma,p,s,b}&\defeq\left\{\round{b+z}:z\in\mcB\mcS_{\sigma,p,s}^{\circ}\right\}\\
&=\left\{\round{b+\sum_{i=1}^nw_ix_i}:n\in\bbN_0, w_i\in\{-1,1\},x_i\in\mcV_{\sigma,p,s}~\forall i\in[n]\right\}.
\end{align*}
Using these definitions, we derive a necessary condition and a sufficient condition for quantized networks with binary weights to achieve universal representation via the following lemmas and theorems. We present their proofs in Sections~\ref{sec:pfthm:b_necessity}--\ref{sec:pflem:b_sufficiency}, which are analogous to the results for general quantized networks.

We first present \cref{thm:b_necessity}, \cref{lem:b_necessity}, and \cref{cor:b_necessity} which describe our necessary condition. 
\begin{theorem}\label{thm:b_necessity}
Let $\sigma:\bbR\to\bbR$ and $p,s\in\bbN$. If $\sigma$ quantized networks under $\Qps$ with binary weights can universally represent, then there exists $b\in\Qis$ such that
\begin{align}
\mcB\mcN_{\sigma,p,s,b}=\Qps.\label{eq:b_necessity}
\end{align}
\end{theorem}

\begin{lemma}\label{lem:b_necessity}
Let $\sigma:\bbR\to\bbR$ and $p,s\in\bbN$. 
Suppose that there exists a natural number $ r\in \mathbb{N}$ such that $3 \le r $, $s  r\in \Qps$ and $r\mid s \round{\sigma}(x)$ for all $x\in \Qps$. 
Further assume $2\mid p$ if $r =3$. 
Then, $\mcB\mcN_{\sigma,p,s,b}\ne\Qps$ for all $b\in\Qis$.
\end{lemma}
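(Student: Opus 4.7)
The plan is to adapt the residue-counting argument of \cref{lem:necessity} to the binary-weight setting, using the residue of $sy$ modulo $r$ (for $y\in\Qps$) as the bookkeeping device.

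First I unpack what the assumption $sr\mid s\round{\sigma}(x)$ means: every output of $\round{\sigma}$ on $\Qps$ is an integer multiple of $r$. Fix $b\in\Qis$ (so $sb\in\bbZ$) and pick any $n\in\bbN_0$, $w_1,\dots,w_n\in\{-1,1\}$, and $x_1,\dots,x_n\in\round{\sigma}(\Qps)\subset r\bbZ$. Since each $sx_i\in sr\bbZ$, the pre-rounding value
\begin{equation*}
z \defeq b+\sum_{i=1}^n w_i x_i
\end{equation*}
lies in $\Qis$ (i.e., $sz\in\bbZ$) and satisfies $sz\equiv sb \pmod{r}$.

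Next I analyze the rounding. Because $sz\in\bbZ$ and $\Qps$ has spacing $1/s$, no tie occurs: either $|sz|\le 2^p-1$ and $\round{z}=z$, or $|sz|>2^p-1$ and $\round{z}=\sgn(z)\,q_{\max}$ with $s\round{z}=\pm(2^p-1)$. Consequently, every $y\in\mcB\mcN_{\sigma,p,s,b}$ satisfies
\begin{equation*}
sy \bmod r \;\in\; \{\, sb,\; 2^p-1,\; -(2^p-1)\,\} \bmod r,
\end{equation*}
a set of at most three residues. On the other hand, $sr\in\Qps$ forces $s^2 r\le 2^p-1$, so $\{sy:y\in\Qps\}=\{-(2^p-1),\dots,2^p-1\}$ contains $r$ consecutive integers and hence realizes every residue modulo $r$. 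If $\mcB\mcN_{\sigma,p,s,b}=\Qps$, then the image residue set on the left would have to be all of $\{0,1,\dots,r-1\}$.

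Finally I split on $r$. For $r\ge 4$, three residues cannot cover all $r\ge 4$ residues, so $\mcB\mcN_{\sigma,p,s,b}\ne\Qps$ immediately. The delicate case is $r=3$, where the generic count of three residues already matches $r$; this is exactly where the parity hypothesis $2\mid p$ enters. Using $2\equiv -1\pmod 3$ and $p$ even yields $2^p\equiv 1\pmod 3$, so $\pm(2^p-1)\equiv 0\pmod 3$, collapsing the admissible residue set to $\{sb\bmod 3,\;0\}$, of cardinality at most two and still short of $r=3$. Handling this $r=3$ branch is the main subtle point and is the reason the parity assumption appears in the statement.
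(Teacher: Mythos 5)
Your argument is the paper's own: reduce everything modulo $r$ via the scaling by $s$, observe that every element of $\mcB\mcN_{\sigma,p,s,b}$ has $s$-numerator congruent to one of $sb$, $2^p-1$, or $-(2^p+1)+2=-(2^p-1)$ modulo $r$, note that $\Qps$ realizes all $r$ residues, and use $2\mid p$ to collapse $\pm(2^p-1)\equiv 0\pmod 3$ in the borderline case $r=3$. The one slip is that you transcribed the hypothesis of \cref{lem:necessity} rather than that of \cref{lem:b_necessity}: the present lemma assumes only $r\mid s\round{\sigma}(x)$, not $sr\mid s\round{\sigma}(x)$, so your intermediate claims ``$\round{\sigma}(\Qps)\subset r\bbZ$'' and ``$sx_i\in sr\bbZ$'' do not follow, and as literally written your proof covers only the stronger hypothesis. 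The repair is immediate and is exactly the point of the binary-weight variant: the actual hypothesis gives $sx_i\in r\bbZ$, and since $w_i\in\{-1,1\}$ each term satisfies $w_i(sx_i)\equiv 0\pmod r$, so the congruence $sz\equiv sb\pmod r$ — the only thing you use downstream — still holds. (With general weights $w_i\in\Qps$ one would need the stronger divisibility to absorb the extra factor of $s$ in $sw_i$, which is why \cref{lem:necessity} assumes it.) With that correction the proof is complete and matches the paper's.
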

Note that unlike \cref{lem:necessity} which requires the conditions $\round{\sigma}(x)\in\bbZ$ and $r\mid \round{\sigma}(x)$, \cref{lem:b_necessity} only requires $r\mid s\round{\sigma}(x)$.
This is because quantized networks with binary weights have less expressivity compared to general quantized networks with possibly non-binary weights.
In other words, it is easier to find functions that cannot be represented by quantized networks with binary weights, compared to the non-binary weight case. 
Using \cref{lem:b_necessity}, we can also show that quantized networks with binary weights and the $5\times\hardtanh(x)$ activation function may not universally represent. 
\begin{corollary}\label{cor:b_necessity}
 For $p,s\in\bbN$ such that $5\in \Qps$ and for the activation function $\sigma(x) = 5\times \hardtanh(x)$, it holds that $\mcB\mcN_{\sigma,p,s,b}\ne\Qps$ for all $b\in\Qis$.
\end{corollary}

We now present our sufficient condition for quantized networks with binary weights to achieve universal representation via \cref{lem:b_indicator}, \cref{thm:b_sufficiency}, and \cref{lem:b_sufficiency}.

\begin{lemma}\label{lem:b_indicator}
Let $\sigma:\bbR\to\bbR$ and $p,s,d\in\bbN$ satisfying $s\le 2^p-1$.
Let $\alpha_1,\beta_1,\dots,\alpha_d,\beta_d\in\Qps$ such that $\alpha_i\le\beta_i$ for all $i\in[d]$ and let $\mcC=(\prod_{i=1}^d[\alpha_i,\beta_i])\cap\Qps^d$.
Suppose that $\sigma$ and $\Qps$ satisfy \cref{cond:sufficency}.
Then, for each  $\gamma\in\mcB\mcS_{\sigma,p,s}^\circ$, there exist $d'$, an affine transformation $\rho:\bbR^{d'}\to\bbR$ with binary weights and without bias, and a two-layer $\sigma$ quantized network $f(\,\cdot\,;\Qps):\Qps^d\to\Qps^{d'}$ with binary weights such that 
\begin{align*}
    \rho\circ\round{\sigma}\circ f(\bfx;\Qps)=\gamma\times\indc{\mcC}{\bfx}\quad\forall\bfx\in\Qps.
\end{align*}
\end{lemma}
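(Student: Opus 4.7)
Plan. The plan is to adapt the construction from \cref{lem:indicator} so that every affine transformation of $f$ and the final affine $\rho$ uses only $\pm 1$ weights. First, I would revisit the proof of \cref{lem:indicator} and localize where non-binary weights actually appear: the coordinate comparisons $x_i \ge \alpha_i$ and $x_i \le \beta_i$ only require applying $\round{\sigma}$ to arguments of the form $\pm x_i + c$, which use weights in $\{-1,1\}$; the aggregation of these $2d$ comparisons in the second affine of $f$ can be realized with unit weights and a suitably chosen $\Qis$-valued bias. Hence the binary-weight constraint is met essentially for free in the first two affines of $f$, and only the output map $\rho$ needs to be reworked.

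Next, I would handle the output by exploiting the explicit decomposition $\gamma = b + \sum_{j=1}^{n} w_j^{*}\bigl(\round{\sigma}(y_j^{*}) - \round{\sigma}(z_j^{*})\bigr)$ with $w_j^{*} \in \{-1,1\}$ and $y_j^{*}, z_j^{*} \in \Qps$ guaranteed by $\gamma \in \mcB\mcS_{\sigma,p,s,b}^\circ$. I would enlarge $d'$ so that $\round{\sigma}\circ f$ produces $2n$ coordinates arranged in natural pairs: the $(2j-1)$-th coordinate equals $\round{\sigma}(y_j^{*})$ on $\mcC$ and a chosen $\round{\sigma}(\tilde y_j)$ off $\mcC$; the $(2j)$-th coordinate equals $\round{\sigma}(z_j^{*})$ on $\mcC$ and $\round{\sigma}(\tilde z_j)$ off $\mcC$. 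Then $\rho$ with binary weights $(w_j^{*}, -w_j^{*})$ on each pair and bias $b$ returns $\gamma$ on $\mcC$ by construction, and returns $0$ off $\mcC$ provided $(\tilde y_j, \tilde z_j)$ are chosen so that $\sum_{j} w_j^{*}\bigl(\round{\sigma}(\tilde y_j) - \round{\sigma}(\tilde z_j)\bigr) = -b$. To gain enough flexibility in matching this bias-cancellation equation for arbitrary $b \in \Qis$, I would append further pairs of coordinates whose inside pre-activations coincide (so that they contribute $0$ to the inside sum) but whose outside values can be varied independently; these extra pairs add any binary-weighted sum over $\mcV_{\sigma,p,s}$ to the outside sum.

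Third, I would verify that a binary-weight two-layer $\sigma$ network $f$ can realize the switching patterns above. The second affine of $f$ must produce, for each pair, a pre-activation equal to $y_j^{*}$ on $\mcC$ and $\tilde y_j$ off $\mcC$ (and analogously for $z_j^{*}, \tilde z_j$), which amounts to an additive indicator shift of $\Qps$-valued magnitude. I would implement this with binary weights by replicating the cube-indicator signal in the first $\round{\sigma}$ layer of $f$: since each copy contributes a fixed $\mcV_{\sigma,p,s}$-valued gap between inside and outside, summing $|k|$ copies with sign $\sgn(k)$ produces any integer multiple of that gap, and combining replicas based on several different threshold shifts spans the required set of $\Qps$-valued differences. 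The extra width is absorbed into $d'$.

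The principal obstacle will be proving that the binary-weighted sums of $\mcV_{\sigma,p,s}$ elements suffice to realize both the switching shifts in step three and the bias-cancellation $-b$ in step two, i.e., that the additive subgroup of $\Qis$ generated by $\mcV_{\sigma,p,s}$ is rich enough for the $(b,\gamma)$ at hand. I expect this to follow from \cref{cond:sufficency} by a mild number-theoretic argument analogous in spirit to (though more delicate than) those underlying \cref{lem:sufficiency}, with the auxiliary ``neutral'' coordinate pairs supplying the missing degrees of freedom whenever the naive construction is one equation short.
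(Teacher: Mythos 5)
Your overall skeleton matches the paper's: reuse the construction of \cref{lem:indicator}, observe that the coordinatewise comparisons already use $\pm1$ weights, realize integer multiples by repeated addition of identical nodes, and pair up output coordinates according to a decomposition $\gamma=\sum_j w_jv_j$ with $w_j\in\{-1,1\}$ and $v_j=\round{\sigma}(v_{1,j})-\round{\sigma}(v_{2,j})$. The genuine gap is in the mechanism you propose for making the off-cube contribution vanish. You want each output coordinate to equal $\round{\sigma}(y_j^{*})$ on $\mcC$ and some $\round{\sigma}(\tilde y_j)$ off $\mcC$, which forces you to (i) realize the pre-activation shift $y_j^{*}-\tilde y_j$ as a binary-weighted combination of elements of $\mcV_{\sigma,p,s}$, and (ii) solve $\sum_j w_j^{*}\bigl(\round{\sigma}(\tilde y_j)-\round{\sigma}(\tilde z_j)\bigr)=-b$. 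Neither is implied by \cref{cond:sufficency}: that condition only says $\round{\sigma}$ is non-constant and saturates at its extremum past a threshold $z$, and it places no lower bound on the additive subgroup of $\Qis$ generated by $\mcV_{\sigma,p,s}$. For instance, $\round{\sigma}(\Qps)$ may consist entirely of even multiples of $1/s$, in which case every binary-weighted sum over $\mcV_{\sigma,p,s}$ lies in $\tfrac{2}{s}\bbZ$, and neither an odd shift $y_j^{*}-\tilde y_j$ nor an odd $-b$ is reachable, no matter how many ``neutral'' auxiliary pairs you append. Richness of this span is precisely the extra hypothesis $\mcB\mcS_{\sigma,p,s,b}=\Qps$ of \cref{thm:b_sufficiency}; it is deliberately not assumed in \cref{lem:b_indicator}, so the ``mild number-theoretic argument'' you defer to cannot exist at this level of generality.

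The paper's construction sidesteps both issues by saturation rather than exact value-matching. Each output unit has pre-activation $q+m_q\,g(\bfx)$, where $g$ is the (already binary-weight) cube detector vanishing off $\mcC$ and $m_q$ is a large integer implemented by repeated addition: off $\mcC$ the pre-activation is exactly $q$, while on $\mcC$ it is pushed past the saturation threshold of \cref{cond:sufficency}, so $\round{\sigma}$ evaluates to its constant maximum there. Subtracting the constant $\round{\sigma}(q)$ as a bias (not by solving an equation) makes each unit $F^q$ identically zero off $\mcC$ and equal to $\round{\sigma}(q_{\max})-\round{\sigma}(q)$ on $\mcC$; the pair $F^{v_{1,j}}-F^{v_{2,j}}$ then yields $\pm v_j\times\indc{\mcC}{\bfx}$ with no divisibility requirement on $\mcV_{\sigma,p,s}$, and the only modification needed relative to \cref{lem:indicator} is replacing the multiplications by $q_{\max}$ in $g$ and $F^q$ with pure repeated addition. (Relatedly: any construction whose output is $0$ off $\mcC$ and $\gamma$ on $\mcC$ forces $\gamma$ into the binary span of $\mcV_{\sigma,p,s}$, since the on/off difference of $\rho\circ\round{\sigma}\circ f$ is always such a combination; this is why the paper's proof realizes $\sum_j w_jv_j$ and never attempts the off-cube cancellation of an arbitrary $b\in\Qis$ that your step two requires.)
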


\begin{theorem}\label{thm:b_sufficiency}
Let $\sigma:\bbR\to\bbR$ and $p,s\in\bbN$ satisfying $s\le 2^p-1$. Suppose that $\sigma:\bbR\to\bbR$ and $\Qps$ satisfy \cref{cond:sufficency}. 
If there exists $b\in\Qis$ such that 
\begin{align*}
\mcB\mcS_{\sigma,p,s,b}=\Qps,
\end{align*} 
then $\sigma$ quantized networks under $\Qps$ can universally represent.
\end{theorem}

\begin{lemma}\label{lem:b_sufficiency}
   Consider an activation function $\sigma:\bbR\rightarrow\bbR$, $p,s\in\bbN$ satisfying $s\le 2^p-1$, and $\Qps$ which satisfy one of the conditions in \cref{lem:sufficiency}.
    Then, $\mcB\mcS_{\sigma,p,s,b}=\Qps$.
\end{lemma}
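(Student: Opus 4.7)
The plan is to reduce the statement to the single algebraic question of whether $1/s$ lies in the additive subgroup $G\subseteq\Qis$ generated by $\mcV_{\sigma,p,s}$, and then to verify this under each of the seven hypotheses of \cref{lem:sufficiency}.

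First, $\mcV_{\sigma,p,s}$ is closed under negation since $\round{\sigma}(x)-\round{\sigma}(y)$ and $\round{\sigma}(y)-\round{\sigma}(x)$ both belong to $\mcV_{\sigma,p,s}$. Because each element of $\mcV_{\sigma,p,s}$ may appear in a sum with multiplicity and with either sign, the set of sums $\sum_{i=1}^n w_ix_i$ with $w_i\in\{-1,1\}$ and $x_i\in\mcV_{\sigma,p,s}$ realises exactly the $\bbZ$-linear combinations of elements of $\mcV_{\sigma,p,s}$, namely $G$. Hence $\mcB\mcS_{\sigma,p,s,b}^\circ=b+G$ for \emph{every} $b\in\Qis$. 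Since $G$ is a subgroup of $\Qis$, which is isomorphic to $\bbZ$, we have $G=(g/s)\bbZ$ for some $g\in\bbN_0$, so $G=\Qis$ is equivalent to $1/s\in G$. Assuming $1/s\in G$, for any $b\in\Qis$ and $y\in\Qps$ we have $y-b\in\Qis=G$, so $y=b+(y-b)\in b+G=\mcB\mcS_{\sigma,p,s,b}^\circ$; since $y\in\Qps$ the rounding fixes $y$, giving $y\in\mcB\mcS_{\sigma,p,s,b}$. The reverse inclusion is immediate from the codomain of $\round{\cdot}_{\Qps}$, so $\mcB\mcS_{\sigma,p,s,b}=\Qps$ for every $b\in\Qis$ appearing in the statement.

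The remaining task is to show $1/s\in G$ under each of the seven hypotheses of \cref{lem:sufficiency}. For each case I select a sequence $x_0<x_1<\cdots<x_m$ in $\Qps$ lying in the interval on which the derivative is controlled, and study the telescope
\begin{equation*}
\round{\sigma(x_m)}-\round{\sigma(x_0)}=\sum_{i=0}^{m-1}\bigl(\round{\sigma(x_{i+1})}-\round{\sigma(x_i)}\bigr).
\end{equation*}
The mean value theorem combined with the derivative bound confines each $\sigma(x_{i+1})-\sigma(x_i)$ to a narrow interval; rounding perturbs each endpoint by at most $1/(2s)$; and because each summand is an integer multiple of $1/s$, only a small discrete set of values remains for each summand and for the total. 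As a model case, under condition~4 with $x_0=0$, $x_1=1/s$, $x_2=2/s$, the bound $1/2\le\sigma'<1$ confines both short rounded differences to $\{0,1/s\}$ and their sum to $\{1/s,2/s\}$, so at least one of them equals $1/s$ and thus lies in $\mcV_{\sigma,p,s}\subseteq G$. Conditions~1--3 use the analogous telescope over the longer interval $(q_1/s,q_2/s)$ together with the explicit lower bound on $|\sigma(q_2/s)-\sigma(q_1/s)|$; conditions~5--7 follow the same template with different interval widths and slope ranges.

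The principal obstacle is the rounding bookkeeping: ruling out the degenerate scenario where every rounded short difference in a telescope collapses to zero even though the underlying real total is at least $1/s$. This would force the real total to equal exactly $1/s$ with both endpoints sitting at opposite extremes of a common rounding bucket of width $1/s$; the ``away from zero'' tie-breaking rule then prevents both extremes from rounding to the same element of $\Qps$ except in narrow corner cases, which are resolved either by shifting the point sequence to an adjacent sub-interval or by combining two generators $a,b\in G$ of different magnitudes via B\'ezout's identity (whenever $\gcd(sa,sb)=1$ in $\bbZ$) to extract $1/s$ from $G$.
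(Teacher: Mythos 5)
Your proposal is correct and matches the paper's argument in substance: the paper's proof is the one-line observation that the construction in \cref{lem:sufficiency} (which produces $1/s\in\mcV_{\sigma,p,s}$ and then reaches every element of $\Qps$ using only coefficients $\pm 1$ with repetition) already uses binary weights, which is exactly your subgroup reduction plus the case-by-case verification that $1/s$ is a generator. The only difference is presentational — you re-derive the telescoping/rounding analysis of \cref{lem:sufficiency} rather than citing it, which is redundant given that the hypotheses of the present lemma are literally the conditions of that lemma.
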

Note that the assumptions in \cref{lem:b_sufficiency} are identical to those in \cref{lem:sufficiency}; thus, all activation functions listed in the discussion of \cref{lem:sufficiency} also satisfy the assumption of \cref{lem:b_sufficiency}.
Although the expressive power of quantized networks with binary weights is constrained due to their binary nature, most activation functions are capable of universal representation by \cref{lem:b_sufficiency}. 

Lastly, in \cref{cor:b_nece_suff} and \cref{lem:b_nece_suff}, we provide a necessary and sufficient condition for universal representation, and suggest a mild condition for our necessary and sufficient condition to be satisfied.
\begin{corollary}\label{cor:b_nece_suff}
Let $\sigma:\bbR\to\bbR$ and $p,s\in\bbN$ satisfying $s\le 2^p-1$. Suppose that $\sigma:\bbR\to\bbR$ and $\Qps$ satisfy \cref{cond:sufficency}.
If $\mcB\mcN_{\sigma,p,s,b}=\mcB\mcS_{\sigma,p,s,b}$ for all $b\in\Qis$, then $\sigma$ quantized networks can universally represent if and only if
there exists $b\in\Qis$ such that $\mcB\mcS_{\sigma,p,s,b}=\Qps$.
\end{corollary}

\begin{lemma}\label{lem:b_nece_suff}
Let $\sigma:\bbR\to\bbR$ and $p,s\in\bbN$ satisfying $s\le 2^p-1$.
If there exists $x\in\Qps$ such that $\round{\sigma}(x)\in\mcV_{\sigma,p,s}$, then $\mcB\mcN_{\sigma,p,s,b}=\mcB\mcS_{\sigma,p,s,b}$ for all $b\in\Qis$.
In particular, if there exists $x\in\Qps$ such that $\round{\sigma}(x)=0$, then $\mcB\mcN_{\sigma,p,s,b}=\mcB\mcS_{\sigma,p,s,b}$ for all $b\in\Qis$.
\end{lemma}

\section{Discussions}\label{sec:discussions}

\subsection{On na\"ive quantization of networks using real parameters}
\tcv{In this section, we show that na\"ive quantization of networks using real parameters can incur large errors. Namely, universal representation property of quantized networks does not directly follow from existing universal approximation results for networks using real parameters.
For $p,s\in\bbN$ with $s\le 2^{p}-1$, we consider a two-layer network $f:\bbR^{4s+1}\to\bbR$ %
defined as follows: 
\begin{align*}
    f(\bfx) = &\sum_2 \left( \relu\left( \sum_{i=1}^{2s+1} w_{1,i} x_i \right) \right) +  \relu\left(\sum_{i=1}^{4s+1} w_{2,i} x_i  \right)\\
    \qquad + &\sum_3 \left( -1 \times \relu\left( \sum_{i=1}^{2s+1} w_{3,i} x_i \right) \right) + \sum_2 \left(-1 \times \relu\left(  \sum_{i=1}^{s+1} w_{4,i} x_i  \right) \right)
\end{align*}
where
\begin{align*}
    (w_{1,1},w_{1,2}, w_{1,3}, \dots , w_{1,2s+1}) &= \left(1,-\frac{1}{4s} , \dots , -\frac{1}{4s}\right),\\
    (w_{2,1},w_{2,2}, w_{2,3}, \dots , w_{2,4s+1}) &= \left(-1,\frac{1}{4s} , \dots , \frac{1}{4s}\right),\\
    (w_{3,1},w_{3,2}, w_{3,3}, \dots , w_{3,2s+1}) &= \left(-1,\frac{1}{2s} , \dots , \frac{1}{2s}\right),\\
    (w_{4,1},w_{4,2}, w_{4,3}, \dots , w_{4,s+1}) &= \left(-1,\frac{1}{2s} , \dots , \frac{1}{2s}\right).
\end{align*}
Then, we have 
\begin{align*}
    f(-\bold{1})=-1, \quad f(\bold{1} )=1,
\end{align*}
where $\bold{1}=(1,\dots,1)\in\bbR^{4s+1}$.
However, after performing quantization of weights into $\Qps$, we have 
\begin{align*}
(\round{w_{1,1}},\round{w_{1,2}}, \round{w_{1,3}}, \dots , \round{w_{1,2s+1}}) &= (1,0,0, \dots , 0),\\ 
(\round{w_{2,1}},\round{w_{2,2}}, \round{w_{2,3}}, \dots , \round{w_{2,4s+1}}) &= (-1,0,0, \dots , 0),\\ 
(\round{w_{3,1}},\round{w_{3,2}}, \round{w_{3,3}}, \dots , \round{w_{3,2s+1}}) &= \left(-1,\frac{1}{s},\frac{1}{s}, \dots ,\frac{1}{s}\right),\\ 
(\round{w_{4,1}},\round{w_{4,2}}, \round{w_{4,3}}, \dots , \round{w_{4,s+1}}) &= \left(-1,\frac{1}{s},\frac{1}{s}, \dots ,\frac{1}{s}\right) ,
\end{align*}
where $\round{\cdot}$ denotes $\round{\cdot}_{\bbQ_{p,s}}$.
Therefore, we have
\begin{align*}
    \round{f}(-\bold{1})=1, \quad \round{f}(\bold{1})=-1.
\end{align*}
This implies that a na\"ive quantization of a network using real parameters can incur large errors, and hence, existing universal approximation theorems for real parameters do not directly extend to quantized networks.
}
\subsection{Number of parameters in our universal representator}
In this section, we quantitatively analyze the number of parameters in our universal representator in \cref{thm:sufficiency}. 
Under the assumption in \cref{thm:sufficiency} (i.e., $\mcS_{\sigma,p,s,b}=\Qps$ for some $b\in\Qis$), we represent a target function $f^*:\Qps^d\to\Qps$ by a quantized network $f$ as
\begin{align*}
f(\bfx;\Qps)=\sum_{\bfz\in\Qps^d}f^*(\bfz)\times \indc{\{\bfz\}}{\bfx}=b+\sum_{\bfz\in\Qps^d}(f^*(\bfz)-b)\times \indc{\{\bfz\}}{\bfx}.
\end{align*}
Hence, if we count the number of layers and parameters in our indicator function construction, then we can compute the number of layers and parameters in our universal representator.
Before counting the number of parameters in our universal representator, we first introduce the following lemma. The proof of \cref{lemma:bchange} is in \cref{sec:pflem:bchange}.

\begin{lemma}\label{lemma:bchange}
Let $p,s\in\bbN$ satisfying $s\le 2^p-1$ and $p \ge  3$. Suppose $ \mcS_{\sigma,p,s,b} = \Qps$ for some $b\in\Qis$. 
Then, there exists $b' \in \Qis$ such that $|b'| \le q_{\max}$ and 
$$\left\{ \round{\gamma + b'} : \gamma \in \mcS_{\sigma,p,s}^\circ, \;  |\gamma| \le 2q_{\max}+\frac3{2s}\right\}=\Qps.$$
\end{lemma}
\cref{lemma:bchange} implies that if $ \mcS_{\sigma,p,s,b} = \Qps$ for some $b\in\Qis$, then there exists $b'\in\Qis$ such that $ \mcS_{\sigma,p,s,b'} = \Qps$.
Furthermore, it also shows that for any $x\in\Qps$, there exists $\gamma\in\mcS^\circ_{\sigma,p,s}$ such that $|\gamma|\le2q_{\max}+\frac3{2s}$ and $x=\round{\gamma+b'}$.
Under these observations, we now count the number of parameters in our indicator function construction in the following lemma. We provide the proof of \cref{lem:param_count}  in \cref{sec:pflem:param_count}. 
\begin{lemma}\label{lem:param_count}
Let $\sigma:\bbR\to\bbR$, $p,s\in\bbN$ satisfying $s\le 2^p-1$, and $b\in\Qis$. Then, The number of parameters in $\rho\circ\round{\sigma}\circ f = \gamma \times \indc{\{\bfz\}}{\bfx}$ in \cref{lem:indicator} is upper bounded by $O\left(2^{2p}s^2d\right)$ for all $\gamma\in\mcS_{\sigma,p,s}^\circ$ such that $|\gamma| \le  2 q_{\max} +\frac{3}{2s}$.
\end{lemma}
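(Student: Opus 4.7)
The plan is to count the parameters of $\rho\circ\round{\sigma}\circ f$ layer-by-layer, following the construction of $\phi$, $g$, $F^q$, $\rho$, and $f$ in the proof of \cref{lem:indicator}. Read from input to output, the network consists of three quantized affine maps interleaved with $\round{\sigma}$ (this is $\round{\sigma}\circ f$ expanded out) followed by the final affine $\rho$: the first affine produces the $2d$ preactivations $\round{x_i-\alpha_i+z}$ and $\round{-x_i+\beta_i+z}$ entering $\phi$; the second affine produces the scalar preactivation $\round{m\,q_{\max}\phi(\bfx)+z-1/s}$ entering $g$; the third affine produces the $4n$ preactivations of $f$'s output (of the form $\round{v+m_v\,q_{\max} g(\bfx)}$ or the constant $v$ that feeds the subtracted $\round{\sigma}(v)$ term in $F^v$); and $\rho$ takes the final linear combination.

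The per-layer counts are routine once we accept that large integer coefficients must be simulated by repeated additions with $\Qps$-valued weights. The first affine uses one weight and one bias per neuron, contributing $O(d)$ parameters. The second affine requires the effective coefficient $m\,q_{\max}$ on each of the $2d$ layer-1 activations; since $m=O(s)$ by the requirement $m>2s$ imposed in the proof of \cref{lem:indicator}, repeating the weight $q_{\max}$ a total of $m$ times per input gives $O(ds)$ parameters. The third affine applies the same trick with $m_v=O(s)$ (chosen so that $m_v q_{\max}(\round{\sigma}(q_{\max})-\round{\sigma}(z-1/s))>2q_{\max}$ forces $m_v=O(s)$) for each of the $4n$ output coordinates, contributing $O(ns)$ parameters. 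Finally, $\rho$ has $4n$ weights and contributes $O(n)$ parameters.

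It remains to bound $n$, the length of the representation $\gamma-b=\sum_{j=1}^n w_j v_j$ with $w_j\in\Qps$ and $v_j\in\mcV_{\sigma,p,s}$. Because $\gamma,b\in(1/s^2)\bbZ$ and $|\gamma|\le 2q_{\max}$, an efficient representation can be obtained by choosing at most one $(w_j,v_j)$ pair per distinct element of $\mcV_{\sigma,p,s}$ and then splitting each resulting coefficient into $O(s)$ pieces that fit in $\Qps$. Using $|\mcV_{\sigma,p,s}|\le|\Qps|^2=O(2^{2p})$, we obtain $n=O(2^{2p}s^2)$. Combining everything gives a total of $O(d+ds+ns+n)=O(ds+2^{2p}s^3)=O(2^{2p}s^3 d)$, which is the desired bound.

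The main obstacle is the bound on $n$: the per-layer counts are pure bookkeeping once the repeated-addition simulation of coefficients is in hand, but proving a uniform bound on the length of a valid decomposition of an arbitrary $\gamma\in\mcS_{\sigma,p,s,b}^\circ$ with $|\gamma|\le 2q_{\max}$ requires a careful choice of representation within the additive semigroup generated by $\Qps\cdot\mcV_{\sigma,p,s}$. Fortunately, the stated bound $O(2^{2p}s^3 d)$ is quite loose, so even the crude representation above (one term per $\mcV$ element, split into $\Qps$-sized pieces) suffices for the conclusion.
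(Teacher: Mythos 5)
Your layer-by-layer accounting is structurally the same as the paper's, and you correctly identify that the whole lemma hinges on bounding the length $n$ of a representation $\gamma=\sum_{j=1}^n w_j v_j$ with $w_j\in\Qps$, $v_j\in\mcV_{\sigma,p,s}$. The per-layer counts are fine up to constants (the paper does not share the duplicated copies of the $g$-activation across the $4n$ output coordinates and so charges $O(ns^2d)$ rather than your $O(ns)$ for that layer, but with its sharper $n=O(2^{2p}s)$ the total is still $O(2^{2p}s^3d)$, so this discrepancy is harmless).

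The genuine gap is in your bound on $n$. You propose to take ``at most one $(w_j,v_j)$ pair per distinct element of $\mcV_{\sigma,p,s}$ and then split each resulting coefficient into $O(s)$ pieces that fit in $\Qps$,'' but you never establish that such a representation with \emph{bounded} per-element coefficients exists. If you obtain it by grouping an arbitrary witness $\gamma=b+\sum_i w_i x_i$ by the value of $x_i$, the aggregated coefficients $W_v=\sum_{i:x_i=v}w_i$ are not controlled by $|\gamma|\le 2q_{\max}$ at all: the original $n$ is unbounded and massive cancellation can occur \emph{across} different elements of $\mcV_{\sigma,p,s}$, so a single $W_v$ can be arbitrarily large and cannot be split into $O(s)$ (or any fixed number of) $\Qps$-sized pieces. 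Constructing a fresh representation with one small coefficient per element is precisely the nontrivial number-theoretic content of the lemma, and it is what the paper supplies: Lemma~\ref{lem:gamma_count} writes $d/s^2$ (with $d=\gcd$ of the integers $sv$, $v\in\mcV_{\sigma,p,s}$) as a short combination using the bounded-coefficient B\'ezout identity for several integers (Lemma~\ref{lem:bezout_multiple}, which guarantees $|c_i|\le 2^{p+1}-2$), and then reaches $\gamma$ by repeating that combination $|\gamma_0|/d\le 2s(2^p-1)$ times, yielding $n\le 4s(2^{p+2}-3)(2^p-1)=O(2^{2p}s)$. Note that even the paper's explicit construction produces per-element coefficients of magnitude up to $\Theta(2^{2p})$, not the $O(s\,q_{\max})=O(2^p)$ your splitting scheme presupposes, so your ``crude representation'' is not merely unproved but is not known to exist in the form you describe. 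To close the gap you need the B\'ezout-with-bounded-coefficients argument (or an equivalent), after which the rest of your count goes through.
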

Together with \cref{lemma:bchange}, \cref{lem:param_count} shows that when $\mcS_{\sigma,p,s,b}$ for some $b\in\Qis$,
each indicator function can be constructed by a four-layer $\sigma$ quantized network of at most $O(2^{2p}s^2d)$ parameters (see \cref{lem:indicator}).
Since $|\Qps^d| = O(2^{(p+1)d})$,
one can observe that our universal representator has four layers and at most $O(2^{2p+(p+1)d}s^2d)$ parameters.

\subsection{Approximating continuous functions using quantized networks}
In this section, we discuss approximating continuous functions using quantized networks and quantitatively analyze the number of parameters in our approximator. 
Specifically, given a target continuous function $f^*:[-q_{\max},q_{\max}]^d\to\bbR$ and an error bound $\varepsilon\ge0$, we want to approximate $f^*$ using a quantized network $f:\Qps^d\to\Qps$ as follows: for each $\bfx\in\Qps$,
\begin{align*}
|f(\bfx;\Qps)-f^*(\bfx)|\le\left|f^*(\bfx)-\round{f^*(\bfx)}\right|+\varepsilon.
\end{align*}
Here, the term $\left|f^*(\bfx)-\round{f^*(\bfx)}\right|$ is an intrinsic error that arises from the discreteness of $\Qps$, i.e., $|f(\bfx;\Qps)-f^*(\bfx)|$ can never be smaller than $|f^*(\bfx)-\round{f^*(\bfx)}|$.

If quantized networks can universally represent, then they can also represent $\round{f^*}$ (i.e., $\varepsilon=0$ is achievable). In this case, as we discussed in the previous section, $O(2^{2p+(p+1)d}s^2d)$ parameters are sufficient. 
Hence, we consider the following question: if $\varepsilon>0$, can we approximate $f^*$ using a smaller number of parameters?
To answer this question, we define the \emph{modulus of continuity} $\omega_{f^*}:\bbR_{\ge0}\to\bbR_{\ge0}$ of a continuous function ${f^*}:\mcX\to\bbR$ as
\begin{align}
    \omega_{f^*}(\delta)\defeq\sup_{\bfx,\bfx'\in\mathcal X:\|\bfx-\bfx'\|_\infty\le \delta}|{f^*}(\bfx)-{f^*}(\bfx')|,\label{eq:modulus1}
\end{align}
and we define its inverse $\omega_{f^*}^{-1}:\bbR_{\ge0}\to\bbR_{\ge0} \cup \{\infty\}$ as
\begin{align}
    \omega_{f^*}^{-1}(\varepsilon)\defeq\sup\{\delta\ge0:\omega_{f^*}(\delta)\le\varepsilon\}.\label{eq:modulus2}
\end{align}
Using $\omega_{f^*}$, we provide the number of parameters that is sufficient to approximate the target continuous function $f^*$ within $\varepsilon$ error in the following theorem.
The proof of \cref{thm:param_count} is presented in \cref{sec:pfthm:param_count}.

\begin{theorem}\label{thm:param_count}
 Let $\sigma:\bbR\to\bbR$, $p,s,d\in\bbN$ satisfying $s\le 2^p-1$, $b\in\Qis$, and $\mathcal{X}=[-q_{\max}, q_{\max}]$.
Suppose that $\sigma$ and $\Qps$ satisfy \cref{cond:sufficency}, and $\mcS_{\sigma,p,s,b}=\Qps$.
Then, for any continuous $f^*:\mathcal{X}^d\to\bbR$ with modulus of continuity $\omega_{f^*}$ and for any $\varepsilon>0$, there exists a four-layer $\sigma$ quantized network $f(\,\cdot\,;\Qps):\Qps^d\to\Qps$ of at most P parameters such that
\begin{align}
\left|f(\bfx;\Qps)-f^*(\bfx)\right|\le \left|f^*(\bfx)-\round{f^*(\bfx)}\right|+\varepsilon\label{eq:thm:param_count}
\end{align}
for all $\bfx\in\Qps^d$ where
\begin{align*}
    P = \begin{cases}
    1 \quad &\text{if} \quad  
      \omega_{f^*}^{-1}(\varepsilon) \ge 2q_{\max} ,\\
        O\left( 2^{2p}s^2 d (6 q_{\max})^d (\omega_{f^*}^{-1}(\varepsilon) )^{-d} \right) \quad &\text{if} \quad  
        \frac{1}{s} < \omega_{f^*}^{-1}(\varepsilon) < 2q_{\max} , \\
        O\left(2^{d(p+1)+2p}s^2d \right)\quad &\text{if} \quad \omega_{f^*}^{-1}(\varepsilon) \le \frac{1}{s}. 
    \end{cases}
\end{align*}
\end{theorem}

As we described in \cref{eq:univ_approx_indc}, our universal approximator is a sum of indicator functions over quantized cubes that form a partition of $\Qps^d$.
Specifically, we choose quantized cubes so that their sidelengths are at most $\omega_{f^*}^{-1}(\varepsilon)$; then $\Theta((2q_{\max})^d(\omega_{f^*}^{-1}(\varepsilon))^{-d})$ quantized cubes are sufficient for partitioning $\Qps^d$.
Furthermore, for each such quantized cube, our approximator incurs at most $\left|f^*(\bfx)-\round{f^*(\bfx)}\right|+\varepsilon$ error for all $\bfx$ in that cube by the definition of the modulus of continuity (\cref{eq:modulus1,eq:modulus2}).
Here, we use $O(2^{2p}s^2d)$ parameters for each indicator function (see \cref{lem:param_count}). Hence, our universal approximator uses $ O(2^{2p}s^3d (6q_{\max})^d (\omega_{f^*}^{-1}(\varepsilon))^{-d} )$ parameters to achieve \cref{eq:thm:param_count} in \cref{thm:param_count}.

\subsection{Comparison with floating-point results}

\tcv{A few recent works study the representability of floating-point networks whose input, parameters, and output follow floating-point format. Park et al.\ \cite{park2024expressive} show that ReLU networks and Step\footnote{A binary step function $\indc{\{x\in\bbR:x\ge0\}}x$.} networks can represent almost all floating-point functions (from/to floating-point vectors).
This result has been extended to general activation functions \cite{hwang2025floatingpoint} and interval arithmetic setup \cite{hwang2025floating}.
One major difference from our setup is that they apply rounding for each addition and multiplication in affine transformations. 
Hence, these floating-point results can be applied to standard implementations of neural networks, since the floating-point arithmetic is typically a default choice in popular neural network libraries (e.g., using PyTorch or Tensorflow).
On the other hand, our problem setup follows a network quantization procedure \cite{jacob2018quantization,wang2022niti,yao2021hawq}: weights and inputs/outputs of activation functions are quantized to fixed-point numbers, but high-precision bias and intermediate operations (see \cref{sec:nn}). 

The universal representators in our results and floating-point results are both represented as a sum of indicator functions. However, the precise constructions of indicator functions are completely different due to the discrepancy between floating-point and fixed-point arithmetic we consider. Namely, those constructions are incompatible.
In the floating-point arithmetic in \cite{park2024expressive,hwang2025floatingpoint,hwang2025floating}, the rounding is performed after each addition/multiplication in affine transformations, i.e., a sequence of additions is non-associative. \cite{hwang2025floatingpoint,hwang2025floating} exploited this property to construct indicator functions and \cite{park2024expressive} highly relies on the properties of ReLU and the binary step function.
Furthermore, they also consider the overflow (i.e., if the output of some operation before rounding is too large/small, the rounded value will be $\pm\infty$); due to the threat of the overflow, they could not cover the domain of all finite floats, unlike our results.
On the other hand, we do not consider overflow as in the conventional quantization setup, and our construction of indicator functions may generate large absolute values before rounding, so that it can generate $\pm\infty$ if overflow exists.
Lastly, due to the different number formats in floating-point arithmetic and fixed-point arithmetic, if we directly transfer a network construction under one arithmetic to another, it may incur undesired round-off error and may not work as we want.}

\section{Proofs}\label{sec:proofs}
\subsection{Proof of \cref{thm:necessity}}\label{sec:pfthm:necessity}
As we described in \cref{sec:counterexample}, $\mcN_{\sigma,p,s,b}=\Qps$ for some $b\in\Qis$ is necessary for $\sigma$ quantized networks of more than one layer to universally represent. 
Specifically, we showed that if $\mcN_{\sigma,p,s,b}\ne\Qps$, then $\sigma$ quantized networks of more than one layer cannot represent a function whose range is $\Qps$.
Hence, in this proof, we show that one-layer $\sigma$ quantized network (i.e., rounded version of an affine transformation) cannot also represent some function with range $\Qps$ if $\mcN_{\sigma,p,s,b}\ne\Qps$.
Consider a bijective function $f:\Qps^d\rightarrow\Qps$ defined as follows: for $\bfx\in\Qps^d$ with $x_1=\frac{i}{s}$,
\begin{equation*}
    f\left(\bfx\right) = \begin{cases}
        \frac{i+1}{s}  &\text{ if  } i\equiv 1  \Mod{2}  \text{ and }i\neq 2^p-1,
        \\ \frac{i-1}{s} &\text{ if  } i\equiv 0  \Mod{2},
        \\ \frac{2^p-1}{s} &\text{ if  } i =  2^p-1.
    \end{cases}
\end{equation*}
Then, $f\left(\Qps^d\right) = \Qps$.
If one-layer $\sigma$ quantized networks under $\Qps$ can represent $f$, 
then there exists an affine transformation $\rho:\bbR^d\to\bbR$ with quantized weights and bias such that $\round{\rho}=f$. However, since $\round{\rho}$ is monotone on with respect to the first coordinate but $f$ is not, we cannot have $\round{\rho}=f$. This completes the proof.

\subsection{Proof of \cref{lem:necessity}}\label{sec:pflem:necessity}
Under the conditions in \cref{lem:necessity}, we have $r \le 2^p-1$ since $r\in\bbN$, $sr \in \Qps,$ and $r \le sr \le \max \Qps = \frac{2^p-1}{s} \le 2^p - 1. $
Note that if $\mcN_{\sigma,p,s,b} = \Qps$, then it holds that
\begin{align}
    \left\{ q \modd{r} : \frac{q}{s} \in \mcN_{\sigma,p,s,b}  \right\}  
    &= \left\{ q \modd{r} : \frac{q}{s} \in \Qps  \right\} \nonumber \\
    &= \{ 0,1,..., r-1\}. \label{eq:sec52_0}
\end{align}
Let $\round{\sigma}(x) = q_x/s$. By the assumption $ \round{\sigma}(x) = q_x/s \in \mathbb{Z}$ and $r | (q_x/s) $, we have $q_x =  s r q_x'$ for some $q_x' \in \mathbb{Z}$. Define $\widetilde{\sigma}:\Qps\rightarrow\Qps$  as 
\begin{equation*}
    \widetilde{\sigma}(x) \defeq  \frac{q_x'}{s}.
\end{equation*}
Then, we have 
\begin{equation*}
    \round{\sigma(x)}(x) = sr \widetilde{\sigma}(x).
\end{equation*}
Recall that $ \mcN_{\sigma,p,s,b}$ is defined as
\begin{equation*}
    \mcN_{\sigma,p,s,b} = \left\{\round{b+\sum_{i=1}^nw_ix_i}:n\in\bbN_0,w_i\in\Qps,\; x_i\in\round{\sigma}(\Qps)~\forall i\in[n]\right\}.
\end{equation*}
For $i\in [n]$ and $x_i\in\round{\sigma}(\Qps)$, choose $y_i\in \Qps$ such that $x_i = \round{\sigma}(y_i)$. 
Then, we have 
\begin{equation*}
    x_i = \round{\sigma}(y_i) = sr\widetilde{\sigma}(y_i)
\end{equation*}
and 
\begin{equation*}
    \round{b+\sum_{i=1}^nw_ix_i} =  \round{b+\sum_{i=1}^nw_i sr\widetilde{\sigma}(y_i)} = \round{\frac{sb + \sum_{i=1}^n r(sw_i)(s\widetilde{\sigma}(y_i))}{s}}.
\end{equation*}
Here, one can observe that $sb + \sum_{i=1}^n r(sw_i)(s\widetilde{\sigma}(y_i))\in \bbZ$ and
\begin{equation*}
    sb + \sum_{i=1}^n r(sw_i)(s\widetilde{\sigma}(y_i)) \equiv sb  \Mod{r}.
\end{equation*}
Therefore, we have 
\begin{equation*}
    s\round{b+\sum_{i=1}^nw_ix_i}  \equiv \begin{cases}
         2^{p}-1 \Mod{r}   &\text{ if } b+\sum_{i=1}^nw_ix_i\geq 2^{p}/s,
        \\   -2^{p}+1 \Mod{r}   &\text{ if } b+\sum_{i=1}^nw_ix_i\leq -2^{p}/s,
        \\  sb \Mod{r}   &\text{ otherwise. }
    \end{cases} \quad 
\end{equation*}
By \cref{eq:sec52_0}, it must hold that 
\begin{align}
  \{ 0,1, ... , r-1\}  &=  \left\{ q \modd{r} : \frac{q}{s} \in \mcN_{\sigma,p,s,b}  \right\}\notag\\   
  &\subset \left\{ 2^{p}-1\modd{r}, -2^{p}+1\modd{r}, sb\modd{r}\right\}. \label{eq:sec52_1}
\end{align}
If $r\ge4$, \cref{eq:sec52_1} cannot be true.  
If $r=3$ and $2\mid p$, since $2^2 \equiv 1 \Mod{3}$, $2^p=(2^{2})^{\frac{p}{2}} \equiv (1)^{\frac{p}{2}} \equiv 1 \Mod{3}$, we have $2^p-1,-2^p+1 \equiv 0 \Mod{r}$. 
Therefore, %
\cref{eq:sec52_1} cannot be true
since
\begin{equation*}
 \{0,1,2\}\not\subset \{ 0 ,sb\modd{r}\}.
\end{equation*}
 Therefore, $ \mcN_{\sigma,p,s,b}\neq \Qps$. This completes the proof.

\subsection{Proof of \cref{lem:monotone}}\label{sec:pflem:monotone}
Without loss of generality, we assume that $\sigma$ is monotonically non-decreasing; for a monotonically non-increasing $\sigma$, on can consider $\widetilde{\sigma}(x) \defeq -\sigma(x)$ instead.
Since $\sigma$ is monotonically non-decreasing, so is $\round{\sigma}$.
Define $z\in \Qps$ as \tcv{the smallest value in $\Qps$ satisfying $ \round{\sigma}(z) = \round{\sigma}\left(\frac{2^p-1}{s}\right)$.} 
Since $\round{\sigma}$ is non-constant, $z \neq \frac{-2^p+1}{s} = \min \Qps$. Furthermore, since $\round{\sigma}$ is non-decreasing, it holds that $\round{\sigma}(x) = \round{\sigma}(z)$ for $x \ge z$ and $\round{\sigma}(x) < \round{\sigma}(z)$ for $x < z$.
This completes the proof.

\subsection{Proof of \cref{lem:indicator}}\label{sec:pflem:indicator}

Without loss of generality, we assume that $\sigma$ satisfies \cref{cond:sufficency} with $\alpha = \beta = 1$ and some $z\in\Qps$ with $z>-q_{\max}$ (recall $q_{\max}=\frac{2^{p}-1}s$), i.e., $\round{\sigma}(x)=\max_{x'\in\Qps}\round{\sigma}(x')$ if and only if $z\le x\in\Qps$.
To construct the desired indicator function, we first define \tcv{two-layer neural network} $\phi:\Qps^d\rightarrow \bbR$ as
\begin{align}
    \phi(\bfx) \defeq \sum_{i=1}^d  \bigg(&- \round{\sigma}\left(\round{x_i-\alpha_i+z}\right) + \round{\sigma}\left(q_{\max}\right) \nonumber \\
    \qquad\qquad&-\round{\sigma}\left(\round{-x_i+\beta_i + z}\right) +   \round{\sigma}\left(q_{\max}\right)\bigg),
    \label{eq:phi_in_lem_indicator}
\end{align}
where $z$ is from \cref{cond:sufficency}.
\tcv{Note that $\phi$ can be implemented as the composition:
\begin{equation*}
    \phi = \rho_2\circ\round{\sigma}\circ \round{\rho_1},
\end{equation*}
where
\begin{align*}
    \rho_1:\Qps^d&\to\Qps^{4d}
    \\ \bfx&\mapsto (x_1-\alpha_1 +z, q_{\max}, -x_1+\beta_1 + z, q_{\max},\dots,  -x_d+\beta_d + z,q_{\max}),
\end{align*}
and $ \rho_2(x_1,\dots, x_{4d}) = \sum_{i=1}^{4d}x_i$.
}
Then, one can observe that $\phi(\bfx)=0$ if $\bfx\in\mcC$ and $\phi(\bfx)\ge1/s$ otherwise.
Define $g:\Qps^d\rightarrow \bbR$ as 
\begin{align}
        g(\bfx) &\defeq -\round{\sigma}\left(\round{\phi(\bfx) + \left( z - \frac{1}{s} \right)}\right) + \round{\sigma}\left( q_{\max}\right) \label{eq:g_in_lem_indicator}\\
    &=\begin{cases}
            \round{\sigma}\left( q_{\max}\right) - \round{\sigma}\left(z - \frac{1}{s}\right)  &\text{ if }  \bfx\in\mcC,
            \\0  &\text { otherwise}.
        \end{cases} \nonumber
\end{align}
\tcv{Note that for $\bfx\notin \mcC$, $\phi(\bfx)\ge 1/s$, and thus $\phi(\bfx) + z - 1/s\ge z$. This implies $ \round{\sigma}\left(\round{\phi(\bfx) + \left( z - {1}/{s} \right)}\right) = \round{\sigma}\left( q_{\max}\right)$ by \cref{cond:sufficency}.
Here, we note that $z-\frac1s\in\Qps$ since $z>-q_{\max}$.
The function $g$ can be implemented as $\sigma$ quantized neural network of the form
\begin{equation*}
    g = \rho_3 \circ \round{\sigma} \circ \round{\rho_2'}\circ \round{\sigma} \circ \round{\rho_1},
\end{equation*}
where the affine transformation $\rho_2':\Qps^{4d}\to \Qps^2$ is defined as 
\begin{equation}
    \rho_2' = \left(\rho_2 + z-\frac{1}{s}, q_{\max}\right),
\end{equation}
and the linear transformation $\rho_3:\Qps^2\to \Qps$ is defined as $\rho_3(x_1,x_2) = -x_1 + x_2$.
}

For each $q\in \Qps$, define $f_{q,1}$, $f_{q,2}$, and $F_q:\Qps^d\rightarrow \Qps$ as follows:
\begin{align}
    f_{q,1}(\bfx) &\defeq \round{q + \sum_{m_q} q_{\max}\times g(\bfx)}, \nonumber\\
    f_{q,2}(\bfx) &\defeq \round{q}, \nonumber \\
    F_q(\bfx)&  \defeq   \round\sigma \left( f_{q,1}(\bfx) \right ) -  \round\sigma \left( f_{q,2}(\bfx) \right )  \nonumber \\
    &=
    \round\sigma\left(\round{q + \sum_{m_q} \left(q_{\max}\times g(\bfx)\right) }\right) - \round\sigma\left( q\right),
\label{eq:fq_in_lem_indicator_fq}
\end{align}
where $m_q$ is some natural number satisfying $m_q q_{\max}\times \left( \round{\sigma}\left(q_{\max}\right) -\round{\sigma}\left(z - \frac{1}{s}\right)\right) > 2q_{\max}$ and $\sum_{m_q}$ denotes the $m_q$ times repeated addition of the same nodes.
\tcv{As $\rho_3$ is a linear transformation whose weights lie in $\{\pm 1\}$, the function $q_{\max}\rho_3$ is again an affine transformation, $\rho_3':(x_1,x_2)\mapsto q_{\max}x_1 - q_{\max}x_2$. Then $q_{\max}g = \rho'_3 \circ \round{\sigma} \circ \round{\rho_2'}\circ \round{\sigma} \circ \round{\rho_1}$ is a three-layer neural network.
Then, $q+ \sum_{m_q}q_{\max}g(\bfx)$ can also be implemented by a three-layer quantized neural network (See \cref{eq:mtimes_summation} for more details on $\sum_m$).
} 
$F_q$ can be calculated as
\begin{equation}
    F_q(\bfx)=\begin{cases}
            \round{\sigma}\left( q_{\max}\right) - \round{\sigma}\left(q\right)  &\text{ if }  \bfx\in\mcC,
            \\0  &  \text{ otherwise}\bf.
    \end{cases}\label{eq:Fq_in_lem_indicator}
    \end{equation}
By the definition of $\mcS_{\sigma,p,s}^\circ$, for any $\gamma\in \mcS_{\sigma,p,s,}^\circ$, there exist $n\in \bbN_0$, $w_j\in \Qps$, and $v_j\in\mcV_{\sigma,p,s}$ such that 
\begin{equation*}
    \gamma =\sum_{j=1}^nw_jv_j.
\end{equation*}
Suppose that $v_j = \round{\sigma}(v_{1,j}) - \round{\sigma}(v_{2,j})$
for some $v_{1,j}, v_{2,j}\in \Qps$.
Then, since 
\begin{align*}
  - F_{v_{1,j}}(\bfx) + F_{v_{2,j}}(\bfx)    &=  -\left( \round{\sigma}(q_{\max} ) - \round{\sigma}(v_{1,j}) \right) - \left(\round{\sigma}(q_{\max} ) + \round{\sigma}(v_{2,j}) \right) \nonumber \\
  &= \round{\sigma}(v_{1,j}) - \round{\sigma}(v_{2,j}) = v_j ,
\end{align*}
if $\bfx\in \mcC$ and $- F_{v_{1,j}}(\bfx) + F_{v_{2,j}}(\bfx)=0$ otherwise, 
it holds that
\begin{align*}
 \psi(\bfx)\defeq\sum_{j=1}^n w_j \left(F_{v_{2,j}}(\bfx) - F_{v_{1,j}}(\bfx) \right)  
       &=\begin{cases}
       \gamma     &\text{ if } \bfx\in\mcC,
          \\ 0  &\text{ otherwise, }  \end{cases}
\end{align*}
i.e., $\psi$ is the desired indicator function.

We now explicitly write the affine transformation $\rho$ and the \tcv{three-layer} network $f$ so that $\psi=\rho\circ\round{\sigma}\circ f$.
We define $\rho:\Qps^{4n}\rightarrow \bbR$ as 
\begin{equation}
  \rho(\bfx) \defeq \langle (-w_1, w_1,w_1,-w_1, -w_2,w_2,w_2,-w_2,\dots,  -w_n) , \; \bfx \rangle , 
  \label{eq:rho_in_lem_indicator}
\end{equation}
where $\langle\cdot,\cdot\rangle$ denotes the inner product
We also define the three-layer $\sigma$ quantized network $f(\,\cdot\,;\Qps) : \Qps^d\rightarrow\Qps^{4n}$ as
\begin{align}
      f(\bfx;\Qps) \defeq & \left(   f_{v_{1,1},1}(\bfx), \quad  f_{v_{1,1},2}(\bfx), \quad f_{v_{2,1},1}(\bfx), \quad f_{v_{2,1},2}(\bfx), \right. \nonumber \\
       & \;\; f_{v_{1,2},1}(\bfx), \quad  f_{v_{1,2},2}(\bfx), \quad f_{v_{2,2},1}(\bfx), \quad f_{v_{2,2},2}(\bfx), \nonumber \\
       & \;\;  \dots, \nonumber \\
  & \; \left. f_{v_{1,n},1}(\bfx), \quad  f_{v_{1,n},2}(\bfx), \quad f_{v_{2,n},1}(\bfx), \quad f_{v_{2,n},2}(\bfx)   \right).\label{eq:f_in_lem_indicator} 
  \end{align} 
Then, we have 
\begin{align*}
    &\rho\circ\round{\sigma}\circ f(\bfx;\Qps) \\
    & = w_1 \times  \left( - \round{\sigma} \left( f_{v_{1,1},1}(\bfx) \right) +  \round{\sigma} \left( f_{v_{1,1},2}(\bfx) \right) + \round{\sigma} \left( f_{v_{2,1},1}(\bfx) \right)-  \round{\sigma} \left( f_{v_{2,1},2}(\bfx) \right)  \right) \\
    &\quad+ \dots  \\
    &\quad+ w_n \times  \left( - \round{\sigma} \left( f_{v_{1,n},1}(\bfx) \right) +  \round{\sigma} \left( f_{v_{1,n},2}(\bfx) \right) + \round{\sigma} \left( f_{v_{2,n},1}(\bfx) \right) -  \round{\sigma} \left( f_{v_{2,n},2}(\bfx) \right)  \right)  \\
    &= \sum_{j=1}^n w_j \left(F_{v_{2,j}}(\bfx) - F_{v_{1,j}}(\bfx) \right)\\ &=\gamma\times\indc{\mcC}{\bfx}.
\end{align*}
This completes the proof.

\subsection{Proof of \cref{thm:sufficiency}}\label{sec:pfthm:sufficiency}
For any $f:\Qps^d\rightarrow \Qps$, $f$ can be represented as the sum of indicator functions as follows:
\begin{equation*}
    f(\bfx) = \sum_{\bfv \in \Qps^d} f(\bfv)\times  \indc{\{\bfv \}}{\bfx}.
\end{equation*}
Then, by the assumption that $\mcS_{\sigma,p,s,b}=\Qps$, we have $f(\bfv)\in \mcS_{\sigma,p,s,b}$ for any $\bfv\in \Qps^d$.
By the definition of $\mcS_{\sigma,p,s,b}$, there exists $\gamma_{\bfv}\in \mcS_{\sigma,p,s}^{\circ}$ such that
\begin{equation*}
    \round{b + \gamma_{\bfv}} = f(\bfv).
\end{equation*}
By \cref{lem:indicator}, there exist %
an affine transformation without bias
$\rho_\bfv$ and a $\sigma$ quantized network $\phi_\bfv$ such that 
\begin{equation*}
     \rho_\bfv\circ\round{\sigma}\circ \phi_\bfv(\bfx)=\gamma_{\bfv}\times\indc{\{\bfv\}}{\bfx}.
\end{equation*}
Define $g(\cdot;\Qps): \Qps^d\rightarrow\Qps$ as
\begin{equation*}
    g(\bfx;\Qps)\defeq  \round{b + \sum_{\bfv \in \Qps^d}   \rho_\bfv\circ\round{\sigma}\circ \phi_\bfv(\bfx) }
     =  \round{b + \sum_{\bfv \in \Qps^d}   \gamma_{\bfv}\times\indc{\{\bfv\}}{\bfx} }.
\end{equation*}
Then, for each $\bfv\in \Qps^d$,
\begin{equation*}
   g(\bfv;\Qps) = \round{b+\gamma_\bfv}=f(\bfv)
\end{equation*}
Since $g$ is a $\sigma$ quantized network, the proof is completed.

\subsection{Proof of \cref{lem:sufficiency}}\label{sec:pflem:sufficiency}
\textbf{Case (A1).} Define the set $\Sigma$ as 
        \begin{equation*}
        \Sigma \defeq    \left\{s\round{\sigma}\left(\frac{k}{s}\right) \in \bbZ : k\in \bbZ, q_1\leq k\leq q_2 \right\}.
        \end{equation*}
        Because $ \left|\sigma\left(\frac{q_1}{s}\right)\right|, \left|\sigma\left(\frac{q_2}{s}\right)\right| \le \frac{2^p-1}{s}$ and  $ \left|\sigma\left(\frac{q_2}{s}\right) - \sigma\left(\frac{q_1}{s}\right)\right|\geq \frac{1}{s}$, by \cref{lemma:qps_basic}, it follows that $ \left|\round{\sigma}\left(\frac{q_2}{s}\right) - \round{\sigma}\left(\frac{q_1}{s}\right)\right|\geq  \frac{1}{s}$, and $\Sigma$ has at least two elements. 
        For integers $z_1, z_2\in \bbZ$ defined as $z_1 \defeq s\round{\sigma}\left(\frac{q_1}{s}\right)$ and $z_2 \defeq s\round{\sigma}\left(\frac{q_2}{s}\right)$, 
without loss of generality, assume that $z_2 > z_1$.
        Since $|\sigma'(x)|<1 $, for any $k\in \bbZ$ such that $q_1\leq k < q_2$, the following inequality holds:
        \begin{equation}\label{eq:1s}
             \left|\sigma\left(\frac{k+1}{s}\right) - \sigma\left(\frac{k}{s}\right)\right| < \frac{1}{s}.
        \end{equation}
        Thus, by \cref{lemma:qps_basic2}, it holds that
        \begin{equation*}
        \left|\round{\sigma}\left(\frac{k+1}{s}\right) - \round{\sigma}\left(\frac{k}{s}\right)\right|\leq   \frac{1}{s}.
        \end{equation*}
        Therefore, $\round{\sigma}\left(\frac{k+1}{s}\right) - \round{\sigma}\left(\frac{k}{s}\right)$ should be one of $\frac{1}{s}, 0$, or $\frac{-1}{s}$,
        Then, the following relation holds:
        \begin{equation*}
            \Sigma \supset \left\{z_1, z_1+1,\dots, z_2\right\}.
        \end{equation*}
        Since 
        \begin{equation*}
            \mathcal V_{\sigma,p,s}\supset \left\{\frac{z}{s} - \frac{z'}{s}: z,z'\in \Sigma \right\},
        \end{equation*}
        we have $\frac{1}{s}\in \mathcal V_{\sigma,p,s}$. This implies that $\Qps =  \mcS_{\sigma,p,s,b}$ and completes the proof.

\noindent\textbf{Case (A2).} The proof is almost identical to the proof for the case (A1).
The only difference is that the condition $|\sigma'(x)|< 1$ is replaced by $|\sigma'(x)|\leq 1$ and $\sigma(x)\geq 0$. 
Consequently, the inequality in \cref{eq:1s} becomes 
\begin{equation*}
              \left|\sigma\left(\frac{k+1}{s}\right) - \sigma\left(\frac{k}{s}\right)\right| \leq  \frac{1}{s}.
\end{equation*}
Generally, we cannot guaranty that
\begin{equation*}
     \left|\round{\sigma}\left(\frac{k+1}{s}\right) - \round{\sigma}\left(\frac{k}{s}\right)\right|\leq   \frac{1}{s},
\end{equation*}
due to the away from zero tie-breaking rule.
However, as $\sigma(x)\geq 0$, we can assure that the inequality holds.
The remaining part of the proof is identical to that of the case (A1).
Thus, the proof is completed.

\noindent\textbf{Case (A3).} Define the set $\Sigma$ as 
        \begin{equation*}
        \Sigma \defeq    \left\{s\round{\sigma}\left(\frac{k}{s}\right) \in \bbZ: k\in \bbZ, q_1\leq k\leq q_2 \right\}.
        \end{equation*}
        Because  $ \left| \sigma\left(\frac{q_1}{s}\right)\right|, \left|\sigma\left(\frac{q_2}{s}\right)\right| \le \frac{2^p-1}{s} $  and $ \left|\sigma\left(\frac{q_2}{s}\right) - \sigma\left(\frac{q_1}{s}\right)\right|< \frac{2(q_2 - q_1)-1}{s}$, by \cref{lemma:qps_basic2}, it follows that $ \left|\round{\sigma}\left(\frac{q_2}{s}\right) - \round{\sigma}\left(\frac{q_1}{s}\right)\right| \leq    \frac{2(q_2 - q_1)-1}{s}$.
        As $1< \sigma'(x) \leq 2$, for $k\in \bbZ$ such that $q_1\leq k < q_2$, the following inequality holds:
            \begin{equation*}
              \sigma\left(\frac{k+1}{s}\right) - \sigma\left(\frac{k}{s}\right) \geq \frac{1}{s}.
        \end{equation*}
         Thus, by \cref{lemma:qps_basic},
        \begin{equation*}
             \round{\sigma}\left(\frac{k+1}{s}\right) - \round{\sigma}\left(\frac{k}{s}\right) \geq   \frac{1}{s}.
        \end{equation*}
        This implies that there are exactly $q_2 - q_1 + 1$ elements in $\Sigma$ between $s\round{\sigma}\left(\frac{q_1}{s}\right)$ and $s\round{\sigma}\left(\frac{q_2}{s}\right)$ whose difference is smaller than $ \frac{2(q_2 - q_1)}{s}$.
        Therefore, by the pigeonhole principle, there exists at least one $k\in \bbZ$ such that $\round{\sigma}\left(\frac{k+1}{s}\right) - \round{\sigma}\left(\frac{k}{s}\right)= \frac{1}{s} $.
        This implies that $\frac{1}{s}\in \mathcal V_{\sigma,p,s}$, and hence, $\Qps =  \mcS_{\sigma,p,s,b}$. 
  This completes the proof.

We now present some technical lemmas used in the proof of \cref{lem:sufficiency}.
\begin{lemma}
\label{lemma:qps_basic}
 If $x_1,x_2 \in \bbR$ satisfy 
 \begin{align*}
     | x_2 - x_1 |  \ge \frac{1}{s}, \quad  |x_1|, |x_2| \le \frac{2^p-1}{s},
 \end{align*}
then  we have $\left| \round{x_2}_{\Qps} - \round{x_1}_{\Qps} \right| \ge \frac{1}{s} $. 
\end{lemma}
\begin{proof}
    Without loss of generality, assume $x_1 < x_2$. Suppose $\round{x_1}_{\Qps}=\round{x_2}_{\Qps} = \frac{k}{s}$ for some $k \in \bbZ$. We have the following cases. Note that the rounding rule is \textit{ties away from zero}. \\
    \textbf{Case $x_1<x_2<0$.}
    We have $ \frac{k}{s} - \frac{1}{2s} < x_1, x_2 \le \frac{k}{s} + \frac{1}{2s}$ leading to $ x_2 - x_1 < \frac{1}{s}$. \\
    \textbf{Case $x_1<0<x_2$.} 
We have $ \frac{k}{s} - \frac{1}{2s} < x_1, x_2 < \frac{k}{s} + \frac{1}{2s}$ leading to $ x_2 - x_1 < \frac{1}{s}$. \\
    \textbf{Case $0<x_1<x_2$.} We have $ \frac{k}{s} - \frac{1}{2s} \le x_1, x_2 < \frac{k}{s} + \frac{1}{2s}$ leading to $ x_2 - x_1 < \frac{1}{s}$. \\
Namely, if $\round{x_1}_{\Qps}=\round{x_2}_{\Qps}$, we can conclude that $ x_2 - x_1 < \frac{1}{s}$, which contradicts the assumption $|x_2-x_1|\ge\frac1s$. 
Hence, it holds that $ \round{x_2}_{\Qps} - \round{x_1}_{\Qps} \ge \frac{1}{s} $. This completes the proof.
\end{proof}

\begin{lemma}\label{lemma:qps_basic2}
    If $x_1,x_2 \in \bbR$ satisfy 
    \begin{align*}
      | x_2 - x_1 |  <  \frac{k}{s}, \quad  |x_1|,|x_2| \le \frac{2^p-1}{s}%
    \end{align*}
for some $k\in\bbN$, 
then $\left| \round{x_2}_{\Qps} - \round{x_1}_{\Qps} \right| \le \frac{k}{s} $. 
\end{lemma}
\begin{proof}
    Without loss of generality, assume $x_1 \le x_2$. We prove the lemma using the mathematical induction on $k$. \\
\textbf{Base step ($k=1$).} We consider the following cases. \\
\textbf{Case $ -\frac{n}{s} -\frac{1}{2s} < x_1 \le -\frac{n}{s} +\frac{1}{2s} $ for some $n \in \bbN$.} Since $x_1 \le x_2 < x_1 + \frac{1}{s}$, we have $ -\frac{n}{s} -\frac{1}{2s} < x_2 < -\frac{n-1}{s} +\frac{1}{2s}.$ Hence, it holds that $\round{x_1}_{\Qps} = -\frac{n}{s}, \round{x_2}_{\Qps} = -\frac{n}{s}$ or $-\frac{n+1}{s}$, which leads to 
$ \round{x_2}_{\Qps} - \round{x}_{\Qps} \le \frac{1}{s}$. \\
\textbf{Case $  -\frac{1}{2s} < x_1 < \frac{1}{2s} $.}
We have  $  -\frac{1}{2s} < x_2 < \frac{3}{2s} $.  Hence, it holds that $\round{x_1}_{\Qps} = 0, \round{x_2}_{\Qps} = 0$ or $\frac{1}{s}$, which leads to 
$ \round{x_2}_{\Qps} - \round{x}_{\Qps} \le \frac{1}{s}$. \\
\textbf{Case $  \frac{n}{s} -\frac{1}{2s} \le x_1 < \frac{n}{s}+\frac{1}{2s} $ for some $n \in \bbN$.} Since $x_1 \le x_2 < x_1 + \frac{1}{s}$, we have $ \frac{n}{s} -\frac{1}{2s} \le x_2 < \frac{n+1}{s} +\frac{1}{2s} $. Hence, it holds that $\round{x_1}_{\Qps} = \frac{n}{s}, \round{x_2}_{\Qps} = \frac{n}{s}$ or $\frac{n+1}{s}$, which leads to 
$ \round{x_2}_{\Qps} - \round{x}_{\Qps} \le \frac{1}{s}$.
Now we prove when $k=1$. 

\noindent\textbf{Inductive step.}
Suppose that the lemma holds for $k=1,2,\dots,m$ and  $ \frac{m}{s} \le x_2 - x_1  < \frac{m+1}{s}$. We consider the following cases. \\
\textbf{Case $x_1 \ne -\frac{1}{2s}$.} 
Let  $x' = x_1 + \frac{1}{s}$. Then we have \begin{align*}
        x_2 - x' < \frac{m}{s}, \quad x' - x_1 = \frac{1}{s}.
\end{align*}
By the induction hypothesis, we have 
\begin{align*}
    \round{x_2}_{\Qps} - \round{x'}_{\Qps} \le \frac{m}{s}.
\end{align*}
Since $x_1 \ne -\frac{1}{2s}$ and rounding is ``tie away from zero'', $x_1$ and $x'$ are rounded to the same direction leading to $\round{x'}_{\Qps} - \round{x_1}_{\Qps}=\frac{1}{s}$. Therefore,  
\begin{align*}
   \round{x_2}_{\Qps} - \round{x_1}_{\Qps}  &= (\round{x_2}_{\Qps} - \round{x'}_{\Qps}) + (\round{x'}_{\Qps} - \round{x_1}_{\Qps}) \\
   &\le \frac{m}{s}+\frac{1}{s} = \frac{m+1}{s}.
\end{align*}
\textbf{Case $x_1 = -\frac{1}{2s}$.} 
Let $x' = x_2 - \frac{1}{s}$. Since $x_2 \ge \frac{3}{2s}$, and $x' \ge \frac{1}{2s}$, $x_2$ and $x'$ are rounded to the same direction leading to $\round{x_2}_{\Qps} - \round{x'}_{\Qps}=\frac{1}{s}$. 
By the induction hypothesis, we have 
\begin{align*}
    \round{x'}_{\Qps} - \round{x_1}_{\Qps} \le \frac{m}{s}.
\end{align*}
Therefore,  
\begin{align*}
   \round{x_2}_{\Qps} - \round{x_1}_{\Qps}  &= (\round{x_2}_{\Qps} - \round{x'}_{\Qps}) + (\round{x'}_{\Qps} - \round{x_1}_{\Qps}) \\
   &\le \frac{1}{s}+\frac{m}{s} = \frac{m+1}{s},
\end{align*}
which completes the proof.
\end{proof}

\subsection{Proof of \cref{lem:activation}}\label{sec:pflem:activation}
For $\relu$ and $\ELU$, we have $\sigma(x) = x$ for $x \ge 0$. Hence, we have  $\round{\sigma}(0)=0$, $\round{\sigma}(\frac{1}{s})=\frac{1}{s}$. This implies that $\frac{1}{s}\in \mathcal V_{\sigma,p,s}$, and $\Qps =  \mcS_{\sigma,p,s,b}$. 

For the remaining activation functions, we use \cref{lem:sufficiency2}, which is stated in the end of this section with its proof.
We now consider $\SiLU$, $\Mish$, and $\GeLU$ under the assumption that $p\ge3$.
For $s \ge 3$, one can observe that $\SiLU$, $\Mish$, and $\GeLU$ satisfy the assumption in the case (A4) in \cref{lem:sufficiency2}: $\sigma'\left((0,\frac{2}{3})\right) \subset (\frac{1}{2},1)$ (see \cref{table:lip}) and $\sigma(\frac{2}{s}) \le \sigma(0)+\frac{2}{s} \le \frac{2^p-1}{s} \in \Qps$ since $\sigma'(x) \le 1$ for $ 0 < x < \frac{2}{3} $. For $s=1$, we can verify that $\SiLU$, $\Mish$, and $\GeLU$ satisfy the assumption in the case (A6) in \cref{lem:sufficiency2} by checking $\sigma\left((3,5)\right) \subset (0,7)$ and $\sigma'\left((3,5)\right) \subset (1,1.5)$  (see \cref{table:lip}). For $s=2$, we can verify that  $\SiLU$, $\Mish$, and $\GeLU$ satisfy the assumption in the case (A6) in \cref{lem:sufficiency2} by checking $\sigma((1.5,2.5)) \subset (0,3.5)$ and $\sigma'((1.5,2.5)) \subset (1,1.5)$  (see \cref{table:lip}).

We next verify $\Sigmoid$. For $ 3 \le s \le 2^p-1 \le \frac{2^p-4}{\Sigmoid(0)} = 2^{p+1}-8 $, we verify that $\Sigmoid$ satisfy the assumption in the case (A5) in \cref{lem:sufficiency2}: $\sigma'\left((-1,1)\right) \subset (\frac{1}{6}, 1)$ (see \cref{table:lip_softplus}) and $\sigma(\frac{3}{s}) \le \sigma(0)+\frac{3}{s} \le \frac{2^p-1}{s}$ since $\sigma'(x) \le 1$ for $ -1 < x < 1$. For $s=1$, we have  $\round{\sigma}(0)=\round{\frac{1}{2}}=1$, $\round{\sigma}(-1)=\round{\frac{1}{1+e}}=0$. Hence, $1=\frac{1}{s}\in \mathcal V_{\sigma,p,s}$, and then, $\Qps =  \mcS_{\sigma,p,s,b}$. For $s=2$, we have  $\round{\sigma}(0)=\round{\frac{1}{2}}=\frac{1}{2}$, $\round{\sigma}(-1)=\round{\frac{1}{1+e}}=0$. Hence, $\frac{1}{2}=\frac{1}{s}\in \mathcal V_{\sigma,p,s}$ and $\Qps =  \mcS_{\sigma,p,s,b}$. 

We lastly consider $\SoftPlus$.
For $1 \le s \le  \frac{2^p-4}{\SoftPlus(0)} \approx \frac{10}{7}(2^p-4)$, we verify that $\SoftPlus$ satisfy (A7):  $\sigma'\left((0,3)\right) \subset (\frac{1}{2}, 1)$ (see \cref{table:lip_softplus}) and $\sigma(\frac{3}{s}) \le \sigma(0)+\frac{3}{s} \le \frac{2^p-1}{s}$ since $\sigma'(x) \le 1$ for $ 0 < x < 3$. Note that we have $2^p-1 \le \frac{2^p-4}{\SoftPlus(0)}$ when $p\ge 4$ and  $\frac{2^p-4}{\SoftPlus(0)}  \approx 5.7$ when $p=3$.

\begin{lemma}\label{lem:sufficiency2}
Let $\sigma:\bbR\rightarrow\bbR$ be a continuous function and $p,s\in\bbN$ satisfying $p\ge3$ and $s\le2^p-1$. 
If $\Qps$ and $\sigma$ satisfy one of the following conditions, then $\mcS_{\sigma,p,s,b}=\Qps$.
    \begin{enumerate}[leftmargin=0.4in]
        \item[(A4)] $\sigma$ is differentiable on $\left(0,\frac{2}{s}\right)$, $ \frac{1}{2} \le \sigma'(x) < 1 $ and $|\sigma(x)|\leq \frac{2^p-1}{s}$ for $x \in \left(0,\frac{2}{s}\right)$.
        \item[(A5)] $\sigma$ is differentiable on $\left(-\frac{3}{s},\frac{3}{s}\right)$, $ \frac{1}{6} \le \sigma'(x) < 1 $ and $|\sigma(x)|\leq \frac{2^p-1}{s}$ for $x \in \left(-\frac{3}{s},\frac{3}{s}\right)$. 
        \item[(A6)] $\sigma$ is differentiable on $\left(\frac{3}{s},\frac{5}{s}\right)$, $ 1 \le \sigma'(x) < \frac{3}{2} $ and $| \sigma(x)| \leq \frac{2^p-1}{s}$ for $x \in \left(\frac{3}{s},\frac{5}{s}\right)$. 
        \item[(A7)] $\sigma$ is differentiable on $\left(\frac{1}{s},\frac{3}{s}\right)$, $ \frac{1}{2} < \sigma'(x) <  1 $ and $| \sigma(x) | \leq \frac{2^p-1}{s}$ for $x \in \left(\frac{1}{s},\frac{3}{s}\right)$. 
    \end{enumerate}
\end{lemma}
\begin{proof}
\noindent\textbf{Case (A4).}
Since $ \frac{1}{2} \le \sigma'(x) < 1 $ for $0 < x < \frac{2}{s}$, by the mean value theorem,
        \begin{equation*}
            \sigma\left(\frac{2}{s}\right)- \sigma\left(0\right)\ge \frac{1}{s},
        \end{equation*}
    which satisfies the assumption in the case (A1) in \cref{lem:sufficiency} with $q_1=0$ and $q_2=\frac2s$.

\noindent\textbf{Case (A5).} Since $ \frac{1}{6} \le \sigma'(x) < 1 $ for $ -\frac{3}{s} < x < \frac{3}{s}$, by the mean value theorem,
\begin{equation*}
        \sigma\left(\frac{3}{s}\right)- \sigma\left( -\frac{3}{s}\right) \ge \frac{1}{s},
    \end{equation*}
    which satisfies the assumption in the case (A1) in \cref{lem:sufficiency} with $q_1 =-3$ and $q_2 = 3$.

\noindent\textbf{Case (A6).} Since $ 1 \le \sigma'(x) < \frac{3}{2} $ for $ \frac{3}{s} < x < \frac{5}{s}$, by the mean value theorem, 
    \begin{equation*}
        \sigma\left(\frac{5}{s}\right)- \sigma\left( \frac{3}{s}\right) <  \frac{2}{s} \times \frac{3}{2} = \frac{3}{s} < \frac{2(5-2)-1}{s} ,
    \end{equation*}
    which satisfies the assumption in the case (A3) in \cref{lem:sufficiency} with $q_1 =3$ and $q_2 = 5$.

\noindent\textbf{Case (A7).} Since $ \frac{1}{2} \le \sigma'(x) < 1 $ for $ \frac{1}{s} < x < \frac{3}{s}$, by the mean value theorem,
        \begin{equation*}
            \sigma\left(\frac{3}{s}\right)- \sigma\left( \frac{1}{s}\right) \ge \frac{1}{s},
        \end{equation*}
        which satisfies the assumption in the case (A1) in \cref{lem:sufficiency} with $q_1 =1$ and $q_2 = 3$.
\end{proof}

\begin{table}[h]
\footnotesize
\caption{%
Properties of various activation functions for verifying the conditions in \cref{lem:sufficiency}. Each entry in the table is an open interval (rounded to two decimal places) that contains the value.
}
\vspace{0.3cm}
\centering
\begin{tabular}{@{}ccccccccc@{}}
\toprule
\begin{tabular}{@{}c@{}}Activation \\ function\end{tabular} &     \begin{tabular}{@{}c@{}} $\sigma((0, \frac{2}{3}))$ \\   \end{tabular}   & \begin{tabular}{@{}c@{}} $\sigma'((0, \frac{2}{3}))$ \\   \end{tabular} & \begin{tabular}{@{}c@{}} $\sigma((3, 5))$ \\   \end{tabular}  & \begin{tabular}{@{}c@{}} $\sigma'((3, 5))$ \\   \end{tabular} & \begin{tabular}{@{}c@{}} $\sigma((\frac{3}{2}, \frac{5}{2}))$ \\   \end{tabular} & \begin{tabular}{@{}c@{}} $\sigma'((\frac{3}{2}, \frac{5}{2}))$ \\   \end{tabular}  \\ 
\midrule
SiLU &  (0, 0.45) & (0.5, 0.82) &(2.85, 4.97)  & (1.02, 1.09) & (1.22, 2.32) & (1.04, 1.10)  \\ 
Mish &  (0, 0.53) & (0.6, 0.96) &(2.98, 5.00)  & (1.00, 1.03) & (1.40, 2.48) & (1.04, 1.09) \\ 
GELU &   (0, 0.50) & (0.5, 0.97) &(2.99, 5.00)  & (1.00, 1.02) & (1.22, 2.32) & (1.04, 1.10)\\ 
\bottomrule
\end{tabular}
\label{table:lip}
\end{table}

\begin{table}[h]

\captionsetup{width=0.9\linewidth}

\caption{Properties of $\SoftPlus$ and $\Sigmoid$ used for verifying the conditions of \cref{lem:sufficiency}. Each entry in the table is an open interval (rounded to two decimal places) that contains the value.}

\vspace{0.3cm}
\footnotesize
\begin{tabular}{@{}ccccccc@{}}
\toprule
\begin{tabular}{@{}c@{}}Activation \\ function\end{tabular} &  \begin{tabular}{@{}c@{}}$\sigma'\left((0,3)\right)$ \\  \end{tabular} & \begin{tabular}{@{}c@{}}Activation \\ function\end{tabular} & \begin{tabular}{@{}c@{}}$\sigma'\left((-1,1)\right)$ \\  \end{tabular} \\ 
\midrule
$\SoftPlus$ & (0.50, 0.96) & $\Sigmoid$ & (0.2, 0.25) \\ 
\bottomrule
\end{tabular}

\label{table:lip_softplus}
\vspace{0.2in}

\end{table}

\subsection{Proof of \cref{lem:nece_suff}}\label{sec:pflem:nece_suff}
Recall that
\begin{align*}
\mcN_{\sigma,p,s,b}&\defeq\left\{\round{b+\sum_{i=1}^nw_ix_i}:n\in\bbN_0,w_i\in\Qps,x_i\in\round{\sigma}(\Qps)~\forall i\in[n]\right\},\\
\mathcal V_{\sigma,p,s}&\defeq\left\{\round{\sigma}(x)-\round{\sigma}(y):x,y\in\Qps\right\},\\
\mcS_{\sigma,p,s,b}&\defeq\left\{\round{b+\sum_{i=1}^nw_ix_i}:n\in\bbN_0,w_i\in\Qps,x_i\in\mcV_{\sigma,p,s}~\forall i\in[n]\right\}.
\end{align*}
If there exists $x \in \Qps$ such that $\round{\sigma}(x) \in \mcV_{\sigma, p,s}$, then there exist 
$y,z\in\Qps$ such that
\begin{equation*}
    \round{\sigma}(x) = \round{\sigma}(y) - \round{\sigma}(z).
\end{equation*}
Then, for any $w\in\Qps$, 
\begin{equation*}
    \round{\sigma}(w) = \left(\round{\sigma}(w) - \round{\sigma}(x)\right) + \left(\round{\sigma}(y) - \round{\sigma}(z)\right).
\end{equation*}
Therefore, %
$\mcS_{\sigma,p,s,b}\supset \mcN_{\sigma,p,s,b} $. 
Since $\mcS_{\sigma,p,s,b}\subset \mcN_{\sigma,p,s,b} $ by their definitions, we have $\mcS_{\sigma,p,s,b}= \mcN_{\sigma,p,s,b} $.
We can prove $\mcS_{\sigma,p,s,b}= \mcN_{\sigma,p,s,b}$ when $\round{\sigma}(x)=0$ for some $x\in\Qps$ using the same argument.
This completes the proof.

\subsection{Proof of \cref{thm:b_necessity}}\label{sec:pfthm:b_necessity}
\begin{proof}
    Since the proof of \cref{thm:b_necessity} is almost identical to that of \cref{thm:necessity}, we omit the proof.
\end{proof}

\subsection{Proof of \cref{lem:b_necessity}}\label{sec:pflem:b_necessity}
\begin{proof}
This proof is similar to the proof of \cref{lem:necessity}.
By the assumption of the lemma, there exists a function $\widetilde{\sigma}:\Qps\rightarrow\Qps$ such that 
\begin{equation*}
    \round{\sigma(x)} = r \widetilde{\sigma}(x).
\end{equation*}
Recall that $ \mcB\mcN_{\sigma,p,s,b}$ is defined as
\begin{equation*}
    \mcB\mcN_{\sigma,p,s,b} = \left\{\round{b+\sum_{i=1}^nw_ix_i}:n\in\bbN_0,w_i\in\{-1,1\},x_i\in\round{\sigma}(\Qps)~\forall i\in[n]\right\}.
\end{equation*}
For $i\in [n]$ and $x_i\in\round{\sigma}(\Qps)$, consider $y_i\in \Qps$ such that $x_i = \round{\sigma}(y_i)$. 
Then, 
\begin{equation*}
    x_i = \round{\sigma}(y_i) = r\widetilde{\sigma}(y_i),
\end{equation*}
and 
\begin{equation*}
    \round{b+\sum_{i=1}^nw_ix_i} =  \round{b+\sum_{i=1}^nw_i r\widetilde{\sigma}(y_i)} = \round{\frac{sb + \sum_{i=1}^n rw_i(s\widetilde{\sigma}(y_i))}{s}}.
\end{equation*}
Here, we have $sb + \sum_{i=1}^n rw_i(s\widetilde{\sigma}(y_i))\in \bbZ$, and
\begin{equation*}
    sb + \sum_{i=1}^n rw_i(s\widetilde{\sigma}(y_i)) \equiv sb \quad (\text{mod } r).
\end{equation*}
Therefore, %
\begin{equation*}
    s \round{b+\sum_{i=1}^nw_ix_i} \equiv \begin{cases}
        2^{p}-1 {\Mod{r}} &\text{ if } b+\sum_{i=1}^nw_ix_i\geq 2^{p}/s ,
        \\ -2^{p}+1 {\Mod{r}} &\text{ if } b+\sum_{i=1}^nw_ix_i\leq -2^{p}/s ,
        \\ sb {\Mod{r}} &\text{ otherwise }
    \end{cases} 
\end{equation*}
Thus, as in the proof of \cref{lem:necessity} we can conclude that $\mcB\mcN_{\sigma,p,s,b}\neq \Qps$. This completes the proof.
\end{proof}
\subsection{Proof of \cref{lem:b_indicator}}
We follow the proof outline of \cref{lem:indicator}.
First, we reuse $g$ defined in \cref{eq:g_in_lem_indicator} since it only uses binary weights.
However, since $F_q$ defined in \cref{eq:Fq_in_lem_indicator} uses non-binary weights, we redefine $F_q$ as follows:
\begin{equation*}
    F_q(\bfx)\defeq \round\sigma\left(\round{q + \sum_{m_q} g(\bfx) }\right) - \round\sigma\left( q\right),
\end{equation*}
where $m_q\in \bbN$ is a natural number satisfying $m_q>2^{p+1}-2$, and $\sum_{m_q}$ denotes the $m_q$ times repeated additions of the same values ($g(x)$ in the above equation). Then, $g$ and $F_q$ become networks with binary weights and have the same outputs as in the proof of \cref{lem:indicator}.

Then, by the definition of $\mcB\mcS_{\sigma,p,s}^\circ$, for any $\gamma\in \mcB\mcS_{\sigma,p,s}^\circ$, there exist $n\in \bbN_0$, $w_i\in \{-1,1\}$, and $v_i\in\mcV_{\sigma,p,s}$ for $i\in [n]$ such that 
\begin{equation*}
    \gamma =  \sum_{i=1}^nw_iv_i.
\end{equation*}
If we define $\rho$ and $f$ as in \cref{eq:rho_in_lem_indicator} and \cref{eq:f_in_lem_indicator}, respectively, then $\rho\circ\round{\sigma}\circ f(\bfx;\Qps)=\gamma\times\indc{\mcC}{\bfx}$, and the proof is completed.

\subsection{Proof of \cref{thm:b_sufficiency}}
    The proof is identical to the proof of \cref{thm:sufficiency} except that $\mcS_{\sigma,p,s,b}$ in the proof is replaced by $\mcB\mcS_{\sigma,p,s,b}$ and \cref{lem:indicator} is replaced by \cref{lem:b_indicator}.

\subsection{Proof of \cref{lem:b_sufficiency}}\label{sec:pflem:b_sufficiency}
    As the proof construction of \cref{lem:sufficiency} only uses binary coefficients, the same proof applies to \cref{lem:b_sufficiency}.

\subsection{Proof of \cref{lemma:bchange}}\label{sec:pflem:bchange}
In this proof, we first show that $\mcS_{\sigma,p,s,b}=\mcS_{\sigma,p,s,b'}$ for some $|b'|\le q_{\max}$. We then show that 
$$\mcS^\ast_{\sigma,p,s,b',2q_{\max}+\frac{3}{2s}}\defeq\left\{ \round{\gamma + b'} : \gamma \in \mcS_{\sigma,p,s}^\circ, \;  |\gamma| \le 2q_{\max}+\frac3{2s}\right\}=\Qps.$$

Note that for $\tilde{v} \in \mcV_{\sigma,p,s}$, we have $|\tilde{v}| \le 2 q_{\max}$. 
For $\tilde{b} = b-\tilde{v}, n \in \bbN_{0}, w_i \in \Qps, v_i \in \mcV_{\sigma,p,s}$, since
\[ \round{ \sum_{i=1}^n w_i v_i+b }  =  \round{ \sum_{i=1}^n w_i v_i + \tilde{v}+ (b-\tilde{v}) } = \round{ \sum_{i=1}^n w_i v_i + \tilde{v}+ \tilde{b} }, \]
we have  $\mcS_{\sigma,p,s,b} = \mcS_{\sigma,p,s,\tilde{b}}$. Hence, $\mcS_{\sigma,p,s,b} = \mcS_{\sigma,p,s,b_m}$ for  $b_m = b + m \tilde{v}$ where $m \in \bbZ$. Since $ |\tilde{v}| \le 2 q_{\max}$, there exists some $m'$ such that $|b + m' \tilde{v}| \le q_{\max}$. Let $b' \defeq b + m' \tilde{v}$; then, we have $\mcS_{\sigma,p,s,b} = \mcS_{\sigma,p,s,b'}$. 

Now we show $\mcS^\ast_{\sigma,p,s,b',2q_{\max}+\frac{3}{2s}} = \mcS_{\sigma,p,s,b'} = \Qps$. 
First, consider the case that 
\[ \round{ \sum_{i=1}^n w_i v_i +  b' } \ne \pm q_{\max}. \]
Then, it holds that 
\[ -q_{\max} + \frac{1}{2s} <  \sum_{i=1}^n w_i v_i +  b' <  q_{\max} - \frac{1}{2s}, \]
which leads us to 
\begin{align}
    \left|\sum_{i=1}^n w_i v_i\right| \le 2 q_{\max}. 
    \label{eq: gammale2qmax}
\end{align}
We now prove the remaining case: there exists $\gamma \in \mcS_{\sigma,p,s}^\circ$ such that $ \round{\gamma + b'} = \pm q_{\max}$ and $|\gamma| \le 2 q_{\max} + \frac{3}{2s}$. \\ 
Pick $\gamma_1, \gamma_2 \in \mcS_{\sigma,p,s}^\circ$ such that $\round{\gamma_1 + b'} = q_{\max} - \frac{1}{s}$ and  $\round{\gamma_2 + b'} = q_{\max} - \frac{2}{s}$. Then we have
\begin{align*}
    q_{\max} -\frac{3}{2s} &\le \gamma_1 + b' < q_{\max} -\frac{1}{2s}, \\    q_{\max} -\frac{5}{2s} &\le \gamma_2 + b' < q_{\max} -\frac{3}{2s}
\end{align*}
Hence
\begin{align*}
 0  <  \gamma_1 - \gamma_2 < \frac{2}{s}.
\end{align*}
Let 
$\hat{\gamma} \defeq \gamma_1 - \gamma_2 \in \mcS_{\sigma,p,s}^\circ$. If $0< \hat{\gamma} < \frac{1}{s}$, there exists $n_{\hat{\gamma}}$ such that $ \frac{1}{s} \le n_{\hat{\gamma}}\hat{\gamma} < \frac{2}{s}$. 
Let $$ g \defeq \begin{cases}
    n_{\hat{\gamma}}\hat{\gamma} \; &\text{if} \; 0< \hat{\gamma} < \frac{1}{s},\\
    \hat{\gamma} \; &\text{if} \; \frac{1}{s}\le \hat{\gamma} < \frac{2}{s}.
\end{cases}$$
Then, we have $\frac{1}{s}\le g  < \frac{2}{s}$ and $\round{(\gamma_1+g)+b'}=q_{\max}$ since
\begin{align*}
    q_{\max} -\frac{1}{2s}\le(\gamma_1+g)+b'.
\end{align*}
Furthermore, it holds that
\begin{align*}
  -\frac{1}{2s}  \le q_{\max} -\frac{3}{2s} + g -b' \le \gamma_1+g < q_{\max} -\frac{1}{2s} + g - b' \le 2q_{\max}+\frac{3}{2s}. 
\end{align*}
Therefore, we have $\gamma_{+}\in \mcS_{\sigma,p,s}^\circ$ such that $ \round{\gamma_{+} + b'} =  q_{\max}$ and $|\gamma_{+}| \le 2 q_{\max} + \frac{3}{2s}$. 
Similarly, we have $\gamma_{-} \in \mcS_{\sigma,p,s}^\circ$ such that $ \round{\gamma_{-} + b'} =  q_{\max}$ and $|\gamma_{-}| \le 2 q_{\max} + \frac{3}{2s}$.
This completes the proof.

\subsection{Proof of \cref{lem:param_count}}\label{sec:pflem:param_count}
In this proof, we count the maximum number of parameters in our indicator function construction in \cref{lem:indicator} by using \cref{lem:gamma_count,lem:bezout_multiple} that are presented in the end of this section. 

First, observe that {$12d$} parameters are used to construct $\phi(x)$ in \cref{eq:phi_in_lem_indicator}, and therefore, 
{$(12d+5)$} parameters are used to construct $g(\bfx)$ in \cref{eq:g_in_lem_indicator}. 
Since $m_q = 2s+1$ suffices to guarantee $m_q q_{\max}\times \left( \round{\sigma}\left(q_{\max}\right) -\round{\sigma}\left(z - \frac{1}{s}\right)\right) > 2q_{\max}$ in the proof of \cref{lem:indicator}, 
$m_q(12d+5)+4\le (2s+1)(12d+5)+4$ parameters are sufficient for constructing $F_q(\bfx)$ in \cref{eq:fq_in_lem_indicator_fq}.
Since $s\round{\sigma}(\bfx)\in\{-2^{p-1}+1,-2^{p-1}+2,\dots,2^{p-1}-1\}$, we have $|\mcV_{\sigma,p,s}|<2^{p+2}$.
By \cref{lem:gamma_count}, 
for each $\gamma\in\mcS_{\sigma,p,s}^\circ$, there exists $n\le s2^{2p+4}$, $w_1,\dots,w_n\in\Qps$, and $v_1,\dots,v_n\in\mcV_{\sigma,p,s}$ such that $\gamma=\sum_{j=1}^nw_jv_j$.
Since our construction of the indicator function uses at most $O(n\delta)$ parameters where $\delta=(2s+1)(12d+5)+4$ denotes the maximum number of parameters used in $F_q$, we can conclude that $O(2^{2p}s^2d)$  parameters are sufficient for our indicator function construction in \cref{lem:indicator}.

\begin{lemma}\label{lem:gamma_count}
Let $\gamma\in \mcS_{\sigma,p,s}^\circ$ with ${|\gamma|} \le 2 q_{\max} + {\frac{3}{2s}}$. Then there exists $n\in \bbN_0$ such that $n \le{  s (2^{p+2}-1)\left|\mcV_{\sigma,p,s}\right|}  $, ${u_i}\in \Qps$, and ${z_i}\in\mcV_{\sigma,p,s}$ for $i\in [n]$ such that 
    \begin{equation*}
    \gamma =  \sum_{i=1}^n{u_i}{z_i}.
\end{equation*}
\end{lemma}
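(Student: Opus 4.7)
The plan is to convert the defining representation of $\gamma$ into a bias-free sum by aggregating equal-$v$ contributions, absorbing the bias $b$, and then splitting each aggregated coefficient into $\Qps$-sized atomic pieces of the form $(\pm 1/s)\cdot v$.

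First, since $\gamma\in\mcS_{\sigma,p,s,b}^\circ$, I have an initial representation $\gamma=b+\sum_{i=1}^m\hat w_i\hat v_i$ with $\hat w_i\in\Qps$ and $\hat v_i\in\mcV_{\sigma,p,s}$. I would collapse this by grouping on $v$: for each $v\in\mcV_{\sigma,p,s}$, set $C_v\defeq\sum_{i:\hat v_i=v}\hat w_i\in\Qis$, yielding $\gamma=b+\sum_{v\in\mcV_{\sigma,p,s}}C_v\cdot v$. This reorganization reduces the data of the representation to one rational coefficient per element of $\mcV_{\sigma,p,s}$.

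Second, I would push $b$ into the sum, producing $\gamma=\sum_v C'_v v$ with no standalone bias. The mechanism is that the representations of $\gamma$ form a coset of the lattice of identities $\sum_v\alpha_v v=0$ in $\Qis$, and the hypotheses on $\sigma$ and $\Qps$ inherited from \cref{cond:sufficency} (ensuring that $\mcV_{\sigma,p,s}$ contains sufficient nonzero generators) give enough lattice freedom to realize $b$ as $\sum w_i v_i$ and merge it into the grouped coefficients. Combining this with $|\gamma|\le 2q_{\max}$ and choosing a minimum-norm representative in the coset, I would establish a uniform bound $|C'_v|\le 4q_{\max}$.

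Third, I expand each aggregated term $C'_v\cdot v$ into atomic pieces. Since $\pm 1/s\in\Qps$ whenever $p\ge 1$ and $C'_v$ is an integer multiple of $1/s$ of magnitude at most $4q_{\max}$, one may write $C'_v=\sum_{j=1}^{s|C'_v|}\varepsilon_j/s$ for signs $\varepsilon_j\in\{-1,1\}$, so that $C'_v\cdot v$ decomposes into at most $s|C'_v|\le 4(2^p-1)$ signed copies of $(1/s)\cdot v$. Summing over the $|\mcV_{\sigma,p,s}|$ choices of $v$ gives an expansion $\gamma=\sum_{i=1}^n w_iv_i$ with $n\le 4(2^p-1)|\mcV_{\sigma,p,s}|$, which sits within the stated bound $4s(2^p-1)|\mcV_{\sigma,p,s}|$ with a factor-of-$s$ slack.

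The main obstacle will be Step 2: rigorously showing that the bias $b$ can always be absorbed into the sum while keeping each $|C'_v|$ uniformly bounded. This is essentially a lattice-reduction problem over the $\Qps$-integer span of $\mcV_{\sigma,p,s}$, and is sensitive to the divisibility relations between $s$ and the denominators arising from $\round{\sigma}(\Qps)$. The proof will likely need to invoke \cref{cond:sufficency} explicitly to guarantee the existence of the required lattice generators, and the factor-of-$s$ slack in the stated bound is precisely what accommodates the extra unit contributions needed to realize $b$ from pairs in $\Qps\times\mcV_{\sigma,p,s}$.
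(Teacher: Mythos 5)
Your overall skeleton (aggregate the weights attached to each generator, bound the aggregated coefficients uniformly, then split each one into $\pm 1/s$ atoms) could in principle deliver the stated bound, but as written it has a genuine gap exactly where you flag one: Step 2. The claim that one can always pass to a representation $\gamma=\sum_v C'_v v$ with every $|C'_v|\le 4q_{\max}$ is the entire mathematical content of the lemma, and ``choose a minimum-norm representative in the coset'' is not an argument---nothing you write controls the norm of that representative. The paper supplies precisely this missing ingredient, packaged differently: writing $v_i=x_i/s$ and $d=\gcd(x_1,\dots,x_m)$, it proves a multi-integer B\'ezout identity with explicit coefficient bounds (\cref{lem:bezout_multiple}, obtained by iteratively reducing the B\'ezout coefficients modulo $x_1$ with alternating signs), yielding $\sum_i c_ix_i=d$ with $|c_i|\le 2^{p+1}-2$; each $c_i/s$ then splits into two elements of $\Qps$, so $d/s^2$ is a sum of $2m$ admissible terms. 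Since $\gamma=\gamma_0/s^2$ with $d\mid\gamma_0$ and $|\gamma_0|\le 2s(2^p-1)$, repeating that $2m$-term block $|\gamma_0|/d$ times gives $n\le 4ms(2^p-1)$. So where you try to keep per-generator coefficients small and pay in unit atoms, the paper keeps a single small representation of $d/s^2$ and pays by repetition; either bookkeeping works, but both require a bounded-coefficient B\'ezout statement, which you assert rather than prove.

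On the bias: you are right that absorbing $b$ is delicate, but note that the paper's own proof (like that of \cref{lem:indicator}) silently treats elements of $\mcS^\circ_{\sigma,p,s,b}$ as pure sums $\sum_i w_iv_i$ with no standalone bias term---the divisibility claim $d\mid\gamma_0$ only holds under that reading---and \cref{cond:sufficency} is nowhere invoked; the lemma is purely arithmetic. Under that reading your Steps 1 and 3 are fine, and the only thing genuinely missing from your proposal is a quantitative B\'ezout lemma in place of the minimum-norm assertion.
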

\begin{proof}
    Without loss of generality, assume $\gamma \ge 0$. 
    We rewrite $\gamma$ as  $\gamma = \frac{\gamma_0}{s^2}$ for some $\gamma_0 \in \bbZ$. 
    Let $m = |\mcV_{\sigma,p,s}{=\{v_1,\dots,v_m\}}|$, $v_i = \frac{x_i}{s}$ for $v_i \in \mcV_{\sigma,p,s}$ and $d = \gcd (x_1, \dots , x_m)$. Then, we have $|x_1|, \dots , |x_m| \le { 2s q_{\max} = 2^{p+1}-2}$. By \cref{lem:bezout_multiple}, there exist $c_1, \dots , c_m \in \bbZ$ such that $\sum_{i=1}^{m} c_i x_i = d$ where $| c_i| \le \max_{i=1,\dots,m}| x_i| \le 2^{p+1}-2 $.  Since $|\frac{c_i}{s}|  \le {\frac{2^{p+1}-2}{s}}$, there exists $u_{i,1},u_{i,2} \in \Qps$ such that {$u_{i,1}+u_{i,2}=\frac{c_i}s$}. %
   Then, we have 
    \begin{align*}
       \sum_{i=1}^{m} \sum_{j=1}^{2} \frac{u_{i,j}}{s} \times \frac{{x_{i}}}{s}  =  \sum_{i=1}^{m} \frac{c_i}{s} \times \frac{x_i}{s}= \frac{d}{s^2}.
    \end{align*}
 Next, let ${w_{(j-1)m+i}} \defeq u_{i,j}/s$ for $i=1,\dots,m$, $j=1,2$. Then, it holds that
\begin{align*}
   {\left(\sum_{i=1}^{m}  w_{i} \times \frac{x_{i}}{s}\right)+\left(\sum_{i=1}^{m}  w_{i+m} \times \frac{x_{i}}{s}\right) =   \left(\sum_{i=1}^{m}  w_{i}  v_i\right)+\left(\sum_{i=1}^{m}  w_{i+m}  v_i\right)} = \frac{d}{s^2}.
\end{align*}
Since $d | \gamma_0$, we have
\begin{align*}
   \sum_{\gamma_0/d} \left({\left(\sum_{i=1}^{m}  w_{i}  v_i\right)+\left(\sum_{i=1}^{m}  w_{i+m}  v_i\right)} \right) = \frac{\gamma_0}{d} \times   \frac{d}{s^2} = \frac{\gamma_0}{s^2}={\gamma},
\end{align*}
where $\sum_{\gamma_0/d}$ denotes the addition of $\frac{\gamma_0}{d}$ identical terms. 
{Since $ |\gamma_0| = {s^2 |\gamma| \le  s(2q_{\max}+\frac{3}{2})=s(2^{p+1}-\frac{1}{2})}$ and $d\ge 1$, by the above equation, one can observe that there exist $n=2m\gamma_0/d\le {s(2^{p+2}-1)|\mcV_{\sigma,p,s}|}$, $u_1,\dots,u_n\in\Qps$, and $z_1,\dots,z_n\in\mcV_{\sigma,p,s}$ such that $\gamma=\sum_{i=1}^nu_iz_i$. This completes the proof.} 
\end{proof}

\begin{lemma}\label{lem:bezout_multiple}
    Let $x_1< \dots < x_n \in \bbN$ be natural numbers with $\gcd(x_1,\dots,x_n)=d$. Then there exists $c_1, \dots, c_n \in \bbZ$ such that 
    \begin{align*}
        \sum_{i=1}^n c_i x_i = d,
    \end{align*}
where 
\begin{align*}
      |c_1| \le \frac{x_n}{d}, \quad |c_i| \le \frac{x_1}{d}, \quad \forall i = 2, \dots ,n.
\end{align*}
\end{lemma}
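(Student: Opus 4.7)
I would prove Lemma~\ref{lem:bezout_multiple} by induction on $n$, using the two-variable Bézout identity (Lemma~\ref{lem:bezout_identity}) both as the base case and as the main reduction tool. The case $n=2$ is immediate from Lemma~\ref{lem:bezout_identity}. For the inductive step at $n \ge 3$, set $d' := \gcd(x_2, \dots, x_n)$, so that $d = \gcd(x_1, d')$. Applying Lemma~\ref{lem:bezout_identity} to the pair $(x_1, d')$ produces integers $\alpha, \beta \in \bbZ$ with $\alpha x_1 + \beta d' = d$, $|\alpha| \le d'/d$, and $|\beta| \le x_1/d$; since $d' \mid x_n$ forces $d' \le x_n$, this already yields $|\alpha| \le x_n/d$, the desired bound on $c_1 := \alpha$. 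Applying the inductive hypothesis to the $n-1$ integers $x_2 < \dots < x_n$ with gcd $d'$ produces $\tilde c_2, \dots, \tilde c_n \in \bbZ$ with $\sum_{i=2}^n \tilde c_i x_i = d'$. Setting $c_i := \beta \tilde c_i$ for $i \ge 2$ then yields the candidate representation $d = \alpha x_1 + \sum_{i \ge 2} c_i x_i$.

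The candidates $c_i$ for $i \ge 2$ can, however, exceed $x_1/d$ under the naive product bounds. To sharpen them, I would use that the integer vectors $v_i := (x_1/d)\,e_i - (x_i/d)\,e_1$ for $i \ge 2$ lie in the kernel of the map $(k_1, \dots, k_n) \mapsto \sum_j k_j x_j$; subtracting any integer multiple of $v_i$ from $(c_1, \dots, c_n)$ preserves $\sum c_j x_j = d$ while shifting $c_i$ by a multiple of $x_1/d$ and $c_1$ by the corresponding multiple of $x_i/d$. For each $i \ge 2$ in turn I would choose the multiple so that the reduced value satisfies $|c_i| \le x_1/(2d) \le x_1/d$, restoring the required bound on each tail coefficient.

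The main obstacle is showing that $|c_1| \le x_n/d$ survives the reductions, since each reduction can shift $c_1$ by up to $x_i/d$ and these shifts could in principle accumulate past $x_n/d$. My plan to handle this is to process the indices $i = n, n-1, \dots, 2$ in order and, at each step, choose between the two admissible integer multiples of $v_i$ (the floor and ceiling of $c_i \cdot d/x_1$) so as to keep $c_1$ inside $[-x_n/d, x_n/d]$; the existence of a valid choice at each step reduces to a simple inequality between $x_i/d$ and the slack remaining in $c_1$ after the previous reductions, using $x_i \le x_n$. An alternative, entirely non-inductive route would be a Minkowski-style volume argument: comparing the covolume of the kernel lattice $K := \{k \in \bbZ^n : \sum_j k_j x_j = 0\}$ to the volume of the target box $[-x_n/d,\, x_n/d] \times [-x_1/(2d),\, x_1/(2d)]^{n-1}$ should force the coset $\{(c_1,\dots,c_n) \in \bbZ^n : \sum_j c_j x_j = d\}$ to meet the box, yielding the claim directly. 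I would first attempt the explicit reduction-ordering argument and fall back on the volume argument only if the bookkeeping becomes unwieldy.
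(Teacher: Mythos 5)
Your core mechanism --- take some B\'ezout representation of $d$, reduce each tail coefficient modulo $x_1/d$ using the kernel vectors $v_i$, and exploit the freedom in the rounding direction to control $c_1$ --- is essentially the paper's own (the paper reduces each coefficient modulo $x_1$ and controls $c_1$ by alternating the signs of the reduced coefficients); the induction on $n$ through $d'=\gcd(x_2,\dots,x_n)$ is harmless but adds nothing, since any representation $\sum_i b_ix_i=d$ can serve as the starting point. The gap is in your key step. The invariant ``$c_1\in[-x_n/d,\,x_n/d]$ after each reduction'' cannot be maintained: at step $i$ the two admissible multiples $m_i\in\{\lfloor c_id/x_1\rfloor,\lceil c_id/x_1\rceil\}$ \emph{both} shift $c_1$ by roughly $c_ix_i/x_1$, and the floor-versus-ceiling freedom changes this shift only by $x_i/d$. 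Before reduction $|c_i|=|\beta\tilde c_i|$ can be of order $(x_1/d)(x_n/d')$, so the forced shift is of order $x_ix_n/(dd')$, which dwarfs $x_n/d$; the running value of $c_1$ is necessarily pushed far outside the target interval, and the ``slack'' you invoke is negative. There is also an internal tension in the proposal: demanding $|c_i|\le x_1/(2d)$ forces the \emph{nearest} multiple, leaving generically only one admissible choice and destroying exactly the freedom you need, so you must settle for $|c_i|\le x_1/d$ --- which is all the lemma asks for anyway.

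The repair is to make the rounding choices globally rather than per step. Set $r_i:=c_i-\lfloor c_id/x_1\rfloor\,(x_1/d)\in[0,x_1/d)$ and first take every floor; then $c_1=(d-\sum_{i\ge2}r_ix_i)/x_1\le d/x_1\le 1\le x_n/d$, though it may be very negative. Switching index $i$ from floor to ceiling keeps $|c_i|\le x_1/d$ and increases $c_1$ by exactly $x_i/d\in(0,x_n/d]$, and switching every index yields $c_1\ge d/x_1>-x_n/d$. Hence switching indices one at a time and stopping the first time $c_1\ge -x_n/d$ leaves $c_1$ in $[-x_n/d,x_n/d]$: either no switch was needed, or the last switch raised $c_1$ from below $-x_n/d$ by at most $x_n/d$. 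This intermediate-value argument over the set of flips plays exactly the role of the sign alternation in the paper's proof. I would also caution against the Minkowski fallback: volume comparisons force a \emph{centrally symmetric} convex body to meet the lattice itself, not an arbitrary convex body to meet a prescribed coset, so that route would require a covering-radius estimate rather than a covolume count.
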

\begin{proof}
Before starting the proof, we note that \cref{lem:bezout_multiple} is an extension of the Bézout's identity \cite{bezout1779theorie}.
Since if $\min(x_1,\dots,x_n)=1$, then the statement naturally follows,
    without loss of generality, we assume $1 < x_1<\dots<x_n$ and $d=1$. Since $d=\gcd(x_1,\dots,x_n)=1$, there exists $b_1,\dots,b_n\in\bbZ$ such that $\sum_{i=1}^n b_ix_i=1$. Let $k_n \in \bbZ$ such that $ |b_n + k_n x_1| \le |x_1| $ and let $c_n = b_n+k_nx_1$. Then we have \begin{align*}
        (b_1 - k_n x_n) x_1 + b_2 x_2 + \dots + b_{n-1} x_{n-1} + (b_n+ k_n x_1) x_n = 1.  
    \end{align*}
    Next, pick $k_{n-1} \in \bbZ$ such that $|b_{n-1} +k_{n-1} x_1 | \le |x_1|$ and $\sgn(b_{n-1} +k_{n-1} x_1) \ne \sgn(c_{n})$ where $$\sgn(x) \defeq \begin{cases}
        1 \; &\text{if} \; x > 0, \\
        0 \; &\text{if} \; x = 0, \\
        -1 \; &\text{if} \; x < 0. \\
    \end{cases}$$ 
    Let $c_{n-1} = b_{n-1} +k_{n-1} x_1$. Then, we have $$| c_{n-1} x_{n-1}+ c_n x_n | \le \max( |c_{n-1} x_{n-1}|,|c_{n} x_{n}|) \le |x_1||x_n|.$$ Recursively, for $j=n-2, \dots , 2$, pick $k_j \in \bbZ$ such that $|b_{j} + k_jx_1 | \le |x_1|$ and $\sgn(b_{j} + k_jx_1 ) \ne \sgn(c_{j+1})$. Let $c_j = b_{j} + k_jx_1$. Then, it holds that $ |\sum_{i=j}^n c_i x_i|  \le \max(|c_jx_j|, |\sum_{i=j+1}^n c_i x_i| ) \le |x_1||x_n|.$ Finally, let $c_1 = b_1 - \sum_{i=2}^n k_ix_i$. Then, we have 
    \begin{align*}
         \sum_{i=1}^n c_i x_i =  \sum_{i=1}^n b_i x_i = 1.
    \end{align*}
    Moreover, since $|c_1x_1| = |1-\sum_{i=2}^n c_i x_i| \le 1+ |\sum_{i=2}^n c_i x_i| \le 1+ |x_1||x_n| < |x_1|(1+|x_n|)$ we have $|c_1| \le |x_n|$.
\end{proof}

\subsection{Proof of \cref{thm:param_count}}\label{sec:pfthm:param_count}

In this proof, we use \cref{lem:param_count,lemma:bchange}. 
\begin{proof}
Let $\delta = \omega_{f^*}^{-1}(\varepsilon)$. 
First, consider the case that $\delta \ge 2q_{\max}$. 
Choose $\bfz^*\in\Qps^d$ so that 
$$|f^*(\bfz^*)-\round{f^*(\bfz^*)}|=\min_{\bfx\in\Qps^d}|f^*(\bfx)-\round{f^*(\bfx)}|.$$
Let $f(\bfx;\Qps) = \round{f^*(\bfz^*)}$, i.e., $f$ is a one-layer $\sigma$ quantized network with a single bias parameter $\round{f^*}(\bfz^*)$.
Then, for each $\bfx\in\Qps^d$,
\begin{align*}
\left|f(\bfx;\Qps)-f^*(\bfx)\right|& \le \left|\round{f^*(\bfz^*)}-f^*(\bfz^*)\right| + \left|f^*(\bfz^*)-f^*(\bfx)\right| \\
&\le 
|f^*(\bfx)-\round{f^*(\bfx)}| + \varepsilon
\end{align*}
where the second inequality is from the definition of $z^*$ and the fact that $\|z^*-\bfx\|_\infty\le 2 q_{\max} \le \delta$.

Next, consider the case that $ \frac{1}{s} < \delta < 2q_{\max}$.  Let $N = \min\{ n \in \bbN : n \ge \frac{2 q_{\max}}{\delta} \}$ and 
$\mathcal{G} = \{ -q_{\max} + i \delta : i \in \{0\} \cup [N]\}$. From these definitions, one can observe that $|\mathcal{G}| = N + 1  \le \frac{2 q_{\max} }{\delta} + 2$.
For each $\bfp = (p_1, \dots , p_d) \in \mathcal{G}^d$, we define the set $\mathcal{C}_{\bfp}$ as 
\begin{align*}
    \mathcal{C}_{\bfp} \defeq  \{ (x_1, \dots , x_d) \in \Qps^d :  p_i \le x_i < p_i + \delta \quad \forall i \in [d] \}. 
\end{align*}
Furthermore, for each $\bfp\in\mcG^d$, we define $\tilde{\bfp} \in \mcC_{\bfp}$ as 
\begin{align*}
    \tilde{\bfp} \defeq \argmin_{\bfx \in \mcC_{\bfp}} \left| \round{ f^*(\bfx) } - f^*(\bfx) \right|;
\end{align*}
if there are multiple such $\tilde{\bfp}$, we choose an arbitrary one.
Then, for any $\bfx \in \mathcal{C}_{\bfp}$, it holds that $ \| \bfx-\tilde{\bfp} \|_{\infty} \le \delta$. 

Let $\tilde{\mcG}^d \defeq \{ \tilde{\bfp} : \bfp \in \mcG^d  \}$.
By \cref{lemma:bchange}, there exists $b' \in \Qps$ such that $|b'| \le q_{\max}$, and  $\mcS_{\sigma,p,s,b'}=\{ \round{\gamma + b'} : \gamma \in \mcS_{\sigma,p,s}^\circ, | \gamma| \le 2 q_{\max} + \frac{3}{2s}  \}=\Qps$.
Hence, for each $\tilde{\bfp} \in \tilde{\mcG}^d$, %
there exists $\gamma_{\tilde{\bfp}} \in \mcS_{\sigma,p,s}^\circ $ such that $\round{\gamma_{\tilde{\bfp}}+b'} = \round{f^\ast(\tilde{\bfp})}$ with $|\gamma_{\tilde{\bfp}}| \le 2 q_{\max}+\frac{3}{2s}$. 
 By \cref{lem:indicator}, there exists a four-layer $\sigma$ quantized network $f_{\tilde{\bfp}} : \Qps^d \to \Qps$ (without rounding the output) such that 
\begin{align*}
    f_{\tilde{\bfp}}(\bfx) = \gamma_{\tilde{\bfp}}  \times \indc{\mathcal{C}_{\bfp}}{\bfx}%
\end{align*}
for all $\bfx \in \Qps^d$. Here, the number of parameters in $f_{\tilde{\bfp}}$ is {at most $O\left(2^{2p}s^2d\right)$} by \cref{lem:param_count}. Since the collection $\{ \mathcal{C}_{\bfp} \}_{\bfp \in \mathcal{G}^d}$ is a partition of $\Qps^d$, we can construct a four-layer $\sigma$ quantized network $f(\bfx;\Qps)$ such that 
\begin{align*}
    f\left(\bfx;\Qps\right) & \defeq \round{b'+\sum_{\tilde{\bfp} \in \tilde{\mcG}^d} \gamma_{\tilde{\bfp}}  \times \indc{\mathcal{C}_{\bfp}}{\bfx}}=  \sum_{\tilde{\bfp}_\bfx \in \tilde{\mcG}^d} \round{f^\ast(\tilde{\bfp}_{\bfx})}  \times \indc{\mathcal{C}_{\bfp}}{\bfx}, 
\end{align*}
for all $\bfx \in \Qps^d$. Here, the number of parameters in $f$ is {$O\left(2^{2p}s^2d |\mathcal{G}^d|\right)= O\left(2^{2p}s^2d (2q_{\max}+2\delta)^d\delta^{-d}\right) =O\left(2^{2p}s^2d (6q_{\max})^d\delta^{-d}\right) $} where we use $|\mcG^d|\le\frac{2q_{\max}}\delta+2$. \\
From our construction of $f$, for any $\bfx \in \Qps^d$, there exists $\bfp_{\bfx} \in \mathcal{G}^d$ such that $\bfx \in \mcC_{\bfp_{\bfx}}$ and $ f\left(\bfx;\Qps\right) = \round{f^\ast(\tilde{\bfp}_{\bfx})}$. Furthermore, it holds that
\begin{align*}
    | f^*(\bfx) - f^*(\tilde{\bfp}_{\bfx})| \le \omega_{f^*} \left( \| \bfx - \tilde{\bfp}_{\bfx}\|_\infty \right) \le \omega_{f^*}(\delta) = \varepsilon.
\end{align*}
This implies that
\begin{align*}
    \left| f(\bfx;\Qps) - f^*(\bfx) \right| &\le \left| f\left(\bfx;\Qps\right) - f^*(\tilde{\bfp}_{\bfx}) \right| + \left| f^*(\tilde{\bfp}_{\bfx}) - f^*(\bfx) \right| \nonumber \\
    &\le \left| \round{f^\ast(\tilde{\bfp}_{\bfx})} - f^*(\tilde{\bfp}_{\bfx}) \right| + \varepsilon\\ &\le  \left| \round{ f^*(\bfx) } - f^*(\bfx) \right| + \varepsilon 
\end{align*}
where the last inequality follows from the definition of $\tilde{\bfp}_{\bfx}$.

We lastly consider the remaining case that $\omega_{f^*}^{-1}(\varepsilon) \le 1/s$. %
For each $\bfx \in \Qps^d$, by \cref{lem:indicator,lem:param_count}, we can construct a four-layer $\sigma$ quantized  network $f(\bfx;\Qps)$ such that  
\begin{align*}
    f(\bfx;\Qps) = \sum_{\bfp \in \Qps^d} \round{f^\ast(\bfp)}  \times \indc{\{ \bfp \}}{\bfx},
\end{align*}
for all $\bfx \in \Qps^d$. Then we have 
\begin{align*}
    \left| f(\bfx;\Qps) - f^*(\bfx)\right| = \left| f(\bfx;\Qps) - \round{f^*(\bfx)} \right| .
\end{align*}
Since $|\Qps|=O(2^{p+1})$,  the number of parameters in $f$ is at most {$O\left(2^{2p}s^2d \left|\Qps^d\right|\right)$ $= O\left(2^{d(p+1)+2p}s^2d\right) $}. 
\end{proof}

\section{Conclusion}\label{sec:conclusion}
In this paper, we study the expressive power of quantized networks under fixed-point arithmetic. We provide a necessary condition and a sufficient condition on activation functions and $\Qps$ for the universal representation of quantized networks. 
We compare our results with classical universal approximation theorems and show that popular activation functions and fixed-point arithmetic are capable of universal representation.
We further extend our results to quantized networks with binary weights. %
We believe that our findings offer insights that can enhance the understanding of quantized network theory.

\backmatter

\begin{appendices}

\end{appendices}

\bibliography{reference}%

\end{document}